\newcommand{\eat}[1]{}
\newtheorem{myDef}{Definition}
\newtheorem{theorem}{Theorem}
\def\BibTeX{{\rm B\kern-.05em{\sc i\kern-.025em b}\kern-.08em
    T\kern-.1667em\lower.7ex\hbox{E}\kern-.125emX}}
\begin{document}

\title{Online Anomaly Detection over Live Social Video Streaming\\
\thanks{*Chengkun He and Xiangmin Zhou contributed equally to this research.}}
\author{
\IEEEauthorblockN{
Chengkun He\IEEEauthorrefmark{1,2}, 
Xiangmin Zhou\IEEEauthorrefmark{1,*}, 
Chen Wang\IEEEauthorrefmark{2},
Iqbal Gondal\IEEEauthorrefmark{1},
Jie Shao\IEEEauthorrefmark{3},
Xun Yi\IEEEauthorrefmark{1}, 
}
\IEEEauthorblockA{\IEEEauthorrefmark{1}School of Computing Technologies, RMIT University, Melbourne, Australia}
\IEEEauthorblockA{\IEEEauthorrefmark{2} Data61 CSIRO, Australia; \IEEEauthorrefmark{3}University of Electronic Science and Technology of China}
\IEEEauthorblockA{Email: 
chengkun.he@student.rmit.edu.au,
\{xiangmin.zhou,iqbal.gondal,xun.yi\}@rmit.edu.au,\\ chen.wang@data61.csiro.au,  shaojie@uestc.edu.cn}
}
\eat{
\author{
\IEEEauthorblockN{Chengkun He}
\IEEEauthorblockA{
    \textit{RMIT University, Data61, Australia}\\
    chengkun.he@student.rmit.edu.au
}
\and
\IEEEauthorblockN{Xiangmin Zhou}
\IEEEauthorblockA{\textit{RMIT University, Australia} \\
xiangmin.zhou@rmit.edu.au
}
\and
\IEEEauthorblockN{Chen Wang}
\IEEEauthorblockA{\textit{Data61, Australia} \\
chen.wang@data61.csiro.au}
\and
\IEEEauthorblockN{Iqbal Gondal}
\IEEEauthorblockA{\textit{RMIT University, Australia} \\
iqbal.gondal@rmit.edu.au
}
\and
\IEEEauthorblockN{Jie Shao}
\IEEEauthorblockA{\textit{University of Electronic Science and Technology of China
} \\
shaojie@uestc.edu.cn}
\and
\IEEEauthorblockN{Xun Yi}
\IEEEauthorblockA{\textit{RMIT University, Australia} \\
xun.yi@student.rmit.edu.au}
}}

\maketitle

\begin{abstract}
Social video anomaly is an observation in video streams that does not conform to a common pattern of dataset's behaviour. Social video anomaly detection plays a critical role in applications from e-commerce to e-learning. Traditionally, anomaly detection techniques are applied to find anomalies in video broadcasting. However, they neglect the live social video streams which contain interactive talk, speech, or lecture with audience. In this paper, we propose a generic framework for effectively online detecting Anomalies Over social Video LIve Streaming (AOVLIS). Specifically, we propose a novel deep neural network model called Coupling Long Short-Term Memory (CLSTM) that adaptively captures the history behaviours of the presenters and audience, and their mutual interactions to predict their behaviour at next time point over streams. Then we well integrate the CLSTM with a decoder layer, and propose a new reconstruction error-based scoring function $RE_{IA}$ to calculate the anomaly score of each video segment for anomaly detection. After that, we propose a novel model update scheme that incrementally maintains CLSTM and decoder. Moreover, we design a novel upper bound and ADaptive Optimisation Strategy (ADOS) for improving the efficiency of our solution. Extensive experiments are conducted to prove the superiority of AOVLIS.
\end{abstract}

\begin{IEEEkeywords}
anomaly detection, coupling long short-term memory, live social video
\end{IEEEkeywords}

\section{Introduction}
With the popularity of smartphones and 5G networks, online social video live streaming service platforms have become popular outlets for people's entertainment, and key marketing and communication tools that help brands reach their online audience. Individuals access these platforms for playing video games and interacting with other users. Online sellers showcase their products through these streaming platforms, interacting with audience to encourage customer purchases. 
\eat{According to the statistics .... how much revenues have been gained due to the livestreaming..... 
The showcase is promoted by sellers' attractive actions (this is anormaly). ... we can have Fig.1 here. Then we can say, what is anomaly... 
Therefore, an appropriate anomaly detection is a promising way of increasing the audience interaction, enhancing the effect of online sells promotion. }
According to the Benchmark Report \cite{benchmark_report}, the global influencer marketing market size has more than doubled since 2019, reaching a record of 21.1 billion U.S. dollars in 2023. Due to the high level engagement, audience prefer to interact with influencers, liking, commenting and sharing, which helps businesses reach a wider audience and increase brand awareness. During a showcase, sellers could attract the audience to promote products by attractive actions. 
An example is shown in Fig. \ref{fig:influencer_show}. The influencer introduces the product in the normal case (Fig. \ref{fig:influencer_show_1}). Then, he wobbles the balance
board, triggering more bullet comments shown in Fig. \ref{fig:influencer_show_2}. When an influencer performs captivating actions and receives significant audience interactions, we consider this case as an anomaly.\eat{Appropriate anomaly detection is a promising way of increasing audience interaction and enhancing the effect of online sales promotion.}
\eat{According to the Benchmark Report\footnote{https://influencermarketinghub.com/}, the global influencer marketing market size has more than doubled since 2019, reaching a record  21.1 billion U.S. dollars in 2023. Due to the high level of engagement, audiences prefer to interact with influencers, liking commenting and sharing, which helps businesses reach a wider audience and increase brand awareness. During a showcase, sellers could attract the audience to promote products by attractive actions. 
An example is shown in Fig. \ref{fig:influencer_show}. The influencer introduces the product in the normal case (Fig. \ref{fig:influencer_show_1}). Then, he wobbles the balance
board, triggering more bullet comments shown in Fig. \ref{fig:influencer_show_2}. When an influencer performs captivating actions and receives significant audience interactions, an anomaly happens, stimulating the high level audience interaction and further purchase behaviour.
Therefore, appropriate anomaly detection is a promising way of increasing audience interaction and enhancing the effect of online sales promotion.}As an observation deviated from the normal behaviour over social live video streams, the appearance of anomaly often indicates the happening of notable influencer or audience behaviours, and the high interactions between speakers and audience. For example, in live commerce, an anomaly may indicate the happening of a soft advertisement or big online purchasing activities. Monitoring anomalies over social live video streams has many applications such as online product promotion and e-learning. Consider an application of live video anomaly monitoring in online product promotion. When a company creates an influencer video to promote a brand, the influencer would make the presentation in some specific styles, emphasize the product features by intriguing and attractive actions to stimulate audience's attention and interactions. \eat{As shown in Fig. \ref{fig:influencer_show}, the influencer introduces the product in the normal case (Fig. \ref{fig:influencer_show_1}). He wobbles the balance board to attract more bullet comments, triggering an anomaly shown in Fig. \ref{fig:influencer_show_2}. }Detecting the anomalies is helpful for companies to analyze user feedback, improving their live video production planning and their products, and boosting the sales of their brand. Thus, how to effectively monitor anomalies over social live video streams has become a vital research problem. 

\begin{figure}[htb]
\centering\vspace{-2ex}
    \subfigure[Normality]{
    \includegraphics[width=3.02 cm]{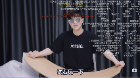}
    \label{fig:influencer_show_1}
    }\vspace{-2ex}
    \subfigure[Anomaly]{
    \includegraphics[width=3.02 cm]{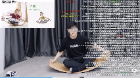}

    \label{fig:influencer_show_2}
    }
    \caption{\small {An example of real-life anomaly over live social video stream.
    }}\vspace{-2ex}
    \label{fig:influencer_show}
\end{figure}

This paper focuses on the problem of online anomaly detection over social video live streams, where audience can interact with the presenters or influencers in videos. This problem requires a meaningful way of justifying a social video anomaly. Traditional techniques \cite{DBLP:conf/cvpr/SultaniCS18,DBLP:conf/mm/YuWCZXYK20,DBLP:journals/tmm/LiCL21} identify the anomalies based on the video content only. However, in online social video live streaming, the special actions of influencers or presenters are just the behaviours of themselves in virtual world, which cannot be connected to any meaningful real world events without the reactions to their behaviours. Thus, in real social live video applications, it is vital to note that the anomalies are related to both the actions of influencers and the reactions of audience, where the anomalies cannot be properly justified without the audience participation. Accordingly, for such a new live media, traditional anomaly detection techniques cannot work properly. Unlike traditional anomaly detection using real world events related videos, our social anomaly detection takes social live video streaming as the data source, and returns a list of anomalies that are triggered by the video influencers and reflected by audience behaviours. Technically, we can extend the traditional anomaly detection techniques for our social video anomaly identification. However, these traditional techniques did not consider the interactions with audience and rely on the actions of speakers, thus they cannot recognize the anomaly of social videos. 

Motivated by the limitations of traditional anomaly detection techniques, this paper addresses the anomaly detection problem for social live videos by fully exploiting the temporal audience interactions and the mutual influence between audience and presenters for anomaly detection over social video live streams. Social interactions have been proven to be effective for behaviour prediction in crowd anomaly detection \cite{Lin2019SocialMI}. However, the behaviour interaction in \cite{Lin2019SocialMI} refers to the crowd interaction in videos, which is the influence among multiple presenters in live video streams. This approach works based on an assumption that audience feedback cannot be received by presenters online or the presenters simply ignore audience's feedback during the video streaming, which is not practical. The main challenges of the social interaction-based anomaly detection are the temporal influence of audience behaviours to themselves and the mutual influence between presenters and audience as two parties in social video live streams, which may affect their behaviours. A practical anomaly detection for social live videos should well recognize the temporal influence of each party and the mutual influence of two parties. 

This paper proposes a generic live social video anomaly detection framework called AOVLIS by exploring the temporal interaction of audience and the mutual influence between presenters and audience. Specifically, we first propose a novel CLSTM model to capture each influencer's behaviour history, the reaction behaviours of audience to influencer behaviour, and the mutual interactions between influencers and audience over a temporal ordered history video segment series for predicting the behaviours of influencers and audience at the next time point in a hidden latent space. To measure the anomaly score of video segment, we apply an encoder layer to map these predicted behaviours into the original feature space, and propose a reconstruction error-based scoring function that considers the information loss from the behaviour prediction of both influencers and audience interactions. This work advances the anomaly detection research in two folds: 1) this is the first work to study the problem of anomaly detection from live social video streams, while traditional techniques focus on general video streams like surveillance videos; 2) unlike traditional deep neural network models for sequential data \cite{6302929,DBLP:journals/neco/HochreiterS97,DBLP:journals/corr/abs-1810-04805} that treat multiple incoming streams independently, CLSTM includes two interactive layers, each of which captures the temporary dependency of its stream and the social dependency on the other layer, thus more practical and extendible for modelling multiple streams with mutual interactions. We summarise our contributions as follows.

\begin{itemize}
    \item We are the first to realize that the anomalies in live social video streaming are related to both the actions of influencers and the reactions of audience, thus reconstruction error scoring (RE) over visual features is inappropriate.
    \eat{We are the first to realize that the assumption that ``the anomalies are related to real world events" is not true in applications, thus reconstruction error scoring (RE) over visual features is inappropriate.}\eat{We are the first to recognize that the assumption that ``anomalies could not be directly associated with real-world events" holds true in this context. More precisely, anomalies are connected to a series of actions and reactions in a virtual world. Consequently, depending solely on reconstruction error scoring (RE) based on visual features is inappropriate.}We propose a weighted RE scoring ($RE_{IA}$) over the temporal behaviours of influencers and that of audience interactions to decide the anomaly score of a segment, which is consistent with the characteristics of live videos.
    
    \item We propose a novel deep neural network model CLSTM to predict the behaviours of influencers and that of audience interactions at the next time point. CLSTM well captures the temporal historical behaviour of the influencers and audience interactions over video sequences and the mutual temporal dependency of influencer behaviours and audience interaction behaviours over each other. 

    \item We present an effective adaptive dimension group representation (ADG) to reduce the dimensionality of the video segment features and design a distance measure in the reduced space to filter out the anomaly candidates. We further design an adaptive optimisation strategy to reduce the filtering cost in anomaly detection.

    \item We propose a novel dynamic updating strategy to incrementally update our model for steady streams. We collect four real video datasets and manually label anomalies in them to evaluate the high performance of our framework.

\end{itemize}

The rest of this paper is organized as follows. Section \ref{sec:related-work} provides an overview of related work. Section \ref{sec:pd} formulates the problem. Section \ref{sec:CLSTM} presents the details of our CLSTM for anomaly detection and the incremental updating algorithm, followed by our anomaly identification optimization scheme in Section \ref{sec:opti}. We report the experimental evaluation results in Section \ref{sec:exp}, and conclude the whole work in \ref{sec:conclusion}.

\section{Related Work}\label{sec:related-work}

Anomaly detection has been studied in different fields such as transportation \cite{DBLP:conf/icde/Liu0CB20}, electronic IT security \cite{DBLP:conf/sigmod/YeLHLS21} and industrial systems \cite{GOMEZGONZALEZ2016484}. Video anomaly detection includes metric-based and learning-based methods. Early researches on this topic rely on handcrafted appearance and motion feature for the detection \cite{DBLP:journals/pr/CongYL13,DBLP:conf/iccv/LuSJ13,DBLP:journals/pami/LiMV14}. These methods incur low efficiency for large-size data and lacking robustness to the data changes.

Deep learning has been used for video anomaly detection ~\cite{DBLP:conf/sitis/KimC19,DBLP:conf/ijcnn/GausBAGBB19,DBLP:conf/ijcai/LiuLLZG19,DBLP:conf/iccv/XiaCWHS15,DBLP:conf/cvpr/LiuLLG18,DBLP:conf/icde/Liu0CB20,DBLP:conf/cvpr/DoshiY20b,DBLP:conf/cvpr/0003CNRD16,DBLP:conf/iccv/HinamiMS17,DBLP:conf/cvpr/IonescuKG019,DBLP:conf/mm/YuWCZXYK20}. 
\eat{Generally, it is believed that abnormal instances are various and occur regularly, while patterns of normal data are limited. 
Hence, a latent space or distribution of normal data is learned for measuring the ``distance" between given instance and learned patterns and deciding the anomaly.}
DARE learns reconstructions by autoencoder in an unsupervised manner \cite{DBLP:conf/iccv/XiaCWHS15}. \eat{It revealed that when data are embedded into low-dimensional representations and then reconstructed by an autoencoder, the inliers tend to have smaller reconstruction errors than the outliers.} AnoGAN concurrently trains a generative model and a discriminator to enable the anomaly detection on unseen data in an unsupervised way \cite{DBLP:conf/ipmi/SchleglSWSL17}.
Modified GANs concurrently learn an encoder during training for anomaly detection \cite{DBLP:conf/iclr/ZenatiFLMC2018}.

Supervised learning-based anomaly detection has been studied \cite{DBLP:conf/sitis/KimC19}. Dual-CNN isolated liquid and electrical objects by type and screen them for abnormalities \cite{DBLP:conf/ijcnn/GausBAGBB19}. GAN addressed the labeled data shortage by transfer learning and learns the video anomaly detection model in a supervised way \cite{DBLP:conf/ideal/ShinC18}. MLEP detects anomaly by enlarging the margin between normal and abnormal events~\cite{DBLP:conf/ijcai/LiuLLZG19}.
Weakly supervised anomaly detection introduced multi-instance learning (MIL) over both normal and abnormal videos \cite{DBLP:conf/icip/ZhangQM19}. However, it trains the anomaly detection model in a supervised way, and incurs several problems such as data imbalance and vulnerability to undefined anomaly~\cite{DBLP:conf/sitis/KimC19}. RTFM \cite{DBLP:conf/iccv/TianPCSVC21} improves MIL-based anomaly classification using temporal feature magnitude to learn more discriminative features.
However, they all assume normal events can be well predicted, which cannot be adopted in our work. Recent work proposed interactive outlier detection and train the model in unsupervised manner for novelty detection \cite{DBLP:conf/er/HeZW22}. However, it handles videos that do not have real-time audience interaction. When these methods are applied to social live video streams, they assume that audience feedback cannot be received by presenters online or the presenters simply ignore audience's feedback during the video streaming, which is not true in practice and thus leads to low detection quality. Thus they cannot well detect the anomalies over streaming. 

Multi-modal methods have been proposed based on both metrics \cite{DBLP:journals/pr/CongYL13} and learning \cite{DBLP:conf/cvpr/DoshiY20b,DBLP:conf/cvpr/LiuLLG18} to detect anomalies in videos. Sparse reconstruction cost detects anomalies based on the labeled local and global normal events \cite{DBLP:journals/pr/CongYL13}. 
Transfer learning and any-shot learning were exploited for anomaly detection \cite{DBLP:conf/cvpr/DoshiY20b}. A future frame prediction network was proposed based on U-Net for anomaly detection \cite{DBLP:conf/cvpr/LiuLLG18}. However, all these multi-modal methods focus on anomalies with big visual differences from normal events, thus, cannot detect anomalies when abnormal and normal events are visually similar. Also, they process different modalities separately as the model inputs, which ignores the interactions between multiple modalities. 
This work addresses the problem of effective and efficient anomaly detection over social video live streams by fully exploiting the mutual influence between influencers and audience, which better reflects the characteristics of social live videos and thus enables high-quality anomaly prediction. 
In existing work, user interactions refer to the crowd interaction in videos. In this work, audience interaction refers to the reaction of users who watch the videos. Unlike LSTM, CLSTM is a novel multi-modal model that embeds the audience interaction and visual features and their mutual interactions. Streaming companies like Netflix detect anomalies over streaming behaviours that are the interactions in data, which is different from the interactions between the video and audience outside of the video. We have solved a new problem with a new solution. \eat{The notations used in this paper are listed in Table \ref{fbl-notations}.

\begin{table}
	\centering
	\caption{\small{Notation Table.}}\label{fbl-notations}
\setlength{\tabcolsep}{0.5ex}
\begin{tabular}{|l|l|}
\toprule
\hline
Notation & 
	Definition \\
	\midrule
 \hline
$V=\{v_1, v_2,\dots, v_n\}$
	& A series of social video segments.\\
$C=\{c_1, c_2,\dots, c_n\}$
	& A series of audience interactions.\\
$f_i$	& The action feature \\
$d$     & The feature dimension\\
$a_i$   & Audience interaction feature\\	
$I$ & The activity feature set  \\
$A$ & The audience interaction feature set  \\
\bottomrule
\hline
\end{tabular}
\end{table}
}

\section{Problem Formulation}\label{sec:pd}

\begin{figure}[t]
\centerline
{
    \includegraphics[width=0.49\textwidth]{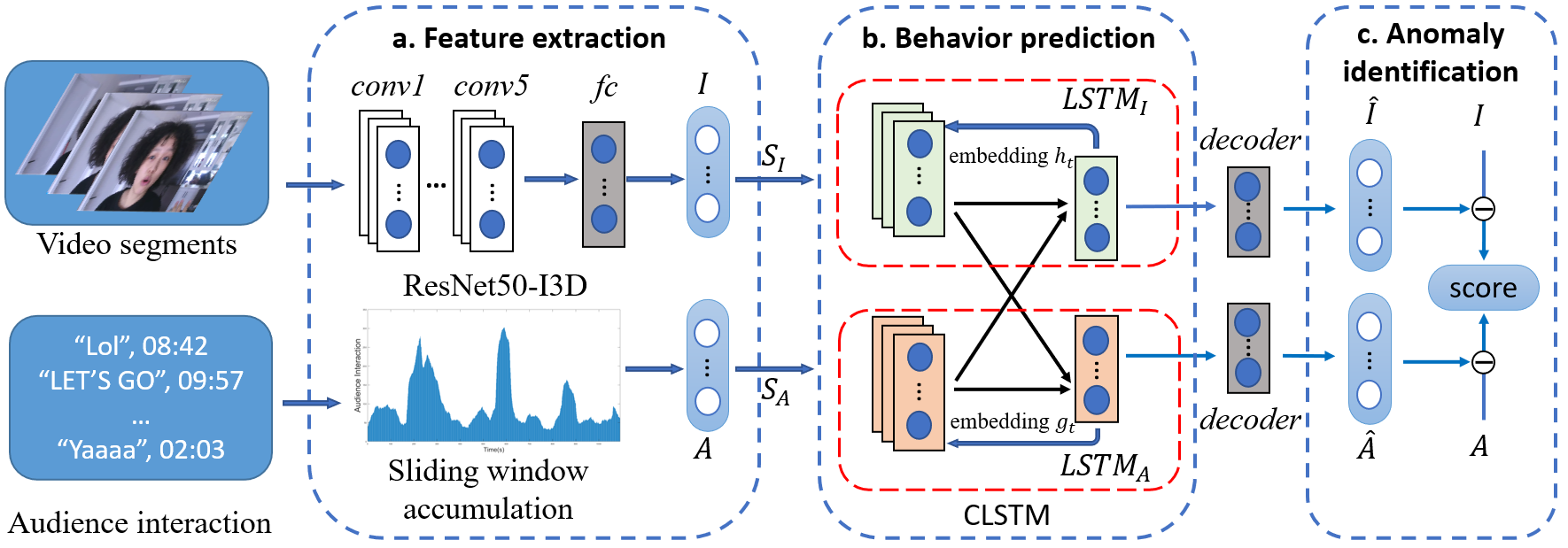}
}
\vspace{-1ex}
\caption{\small{ Overview of AOVLIS.}}
\label{fig:framework_whole}\vspace{-4ex}
\end{figure}

This section formally defines the problem of anomaly detection over social video live streaming. We first define anomaly.

\begin{myDef}
    \label{def_1}
    In social video live streaming, a {\normalfont video~ segment} contains its own content and the audience interactions to it in the forms of bullet comments and live chat. Given a social video live streaming, a series of video segments $\{v_1,\cdots,v_n\}$ can be extracted using a sliding window. An {\normalfont anomaly} is a video segment showing a specific action of speakers, such as speaking and promoting, which has different style from those of the majority parts over the streaming.
\end{myDef}

Here, style refers to the behaviour pattern such as a facial expression or body movement etc. Our problem is proposed for live social video streaming and can be extended to any applications involving video productions with external user participation. AOVLIS is designed for detecting anomalies over live social video streaming with audience interaction. We handle not only a use case, but a new data type with many applications such as e-learning and highlight detection.
A video segment is the series of $q$ temporally ordered adjacent video keyframes. As \cite{DBLP:journals/sensors/LeeCLW19}, we set $q$=10 in the test. A segment in live social video carries three types of features: visual content, temporal information, and real-time comments. Traditional anomaly detection exploits the visual content and temporal information to achieve high performance for general videos involving notable actions or object movements. However, in live social videos that contain interactive talk, speech or lectures, the spatial-temporal features are not discriminative enough due to the limited actions and movement of speakers in them. Fortunately, audience interactions provide other clues about the anomalies from the reflections of audience side. Exploiting audience interactions as a complement of visual features will be promising for the information
compensation in anomaly detection in social video live streams. We extract two types of features to describe a video segment.

\begin{itemize}
\item Action recognition feature: The spatio-temporal representation of each video segment. Each segment $v_i$ (64 frames with resolution $480\times 480$ ) is fed into a pre-trained neural network, ResNet50-I3D \cite{DBLP:conf/cvpr/CarreiraZ17}, to extract a $d-$dimensional ($d=400$ in tests) feature $f_i$. The features of consecutive segments are ordered to form a sequence representation. 
\item Audience interaction feature: 
\eat{The statistics of the real-time comments associated with each video segment. The audience interaction feature of a video segment indicates the number of comments to it, which shows the reaction of audience within a short time slot, the word embedding and sentiment features for each audience interaction.}
The audience interaction feature of a video segment is combined by the number of comments, word embedding, and sentiment embedding for audience interaction within a short time slot.
\end{itemize}

We address the problem of detecting anomalies
from social video live streams with two types of features, as defined below:

\begin{myDef}
    \label{def_3}
   Given a social video live streaming described as a series of video segments $\{v_1,\cdots,v_n\}$ and corresponding  audience interactions $\{c_1,\cdots,c_n\}$, and an anomaly score function $RE_{IA}(t)$, our anomaly detection detects   
   a list of segments with top anomaly scores, $S_{abnormal}$, such that for any clip $v_t\notin S_{abnormal}$ and $v_{t'}\in S_{abnormal}$,
    the following condition holds:
           $RE_{IA}(t)< RE_{IA}(t')$. 
\end{myDef}

$RE_{IA}(t)$ is a weighted function that reflects the reconstruction errors from both the features of $v_t$ and $c_t$. 
\eat{
\begin{figure}[t]
\centering
    \subfigure[bullet comments on bilibili]{
    \includegraphics[width=3.9 cm]{figures/bilibili_chat_new.png}
    \label{live_com:bili}
    }
    \subfigure[live chat on twitch]{
    \includegraphics[width=4.1 cm]{figures/twitch_chat_new.png}
    \label{live_com:twitch}
    }
    \eat{\subfigure[audience interaction over time]{
    \includegraphics[width=3.7 cm]{figures/thresholding.jpg}
    \label{live_com:au_in}}
    }\vspace{-4ex}
    \caption{\small Examples of real-time comments on different platforms.}
    \label{live_com}\vspace{-4ex}
\end{figure}

Fig.~\ref{live_com} show the social video examples with bullet comments and live chat from social platforms. Audience makes real-time comments while watching the videos. Each comment in the red boxes has a timestamp, and highly correlated with the video content at this time point. Our work is to address the problem of effective and efficient anomaly detection over social video live streams. }We propose a framework for detecting Anomalies Over social Video LIve Streaming (AOVLIS), as shown in
Fig. \ref{fig:framework_whole}. AOVLIS includes three main components: feature extraction, behaviour prediction, and anomaly identification. In addition,
we design an incremental model update algorithm to optimize the efficiency of dynamic maintenance. In training phase, we first extract features as shown in Fig. \ref{fig:framework_whole} (a) by applying a neural network ResNet50-I3D to each video segment and quantifying the audience interaction within a time window to it, and organising a representation embedding temporal information from consecutive segments. Then, we predict the behaviour of influencers and that of audience interaction at the next time point based on CLSTM to capture the mutual influence between presenters and audience as shown in Fig. \ref{fig:framework_whole} (b), which are further mapped into the original feature space using a decoder layer. In test phase, we maintain the model over time dynamically. In anomaly detection as shown in Fig. \ref{fig:framework_whole} (c), the $RE_{IA}(t)$ of a current prediction decides if the next segment is abnormal. We propose a novel dimensionality reduction and adaptive optimisation scheme to improve the detection.

\section{CLSTM-based Anomaly Detection}\label{sec:CLSTM}
We first extract the features of each video segment. Then we present our CLSTM to predict the behaviours of influencers and audience interaction, a new scoring function to measure the anomaly score of the predicted video segment, and a dynamic updating strategy for CLSTM maintenance.  

\subsection{Social Live Video Feature Extraction}
\label{sec_dpfe}
To find the anomalies over social video live streams effectively and efficiently, we need to extract compact yet informative features for social live videos. Extracting features from each video frame incurs high computation cost while losing the temporal information dependency of influencer and audience behaviours. To overcome the difficulties of this task, we need to find a number of video segments and extract features from them for the behaviour prediction. As discussed in \cite{DBLP:conf/cvpr/CarreiraZ17}, a video segment of 64 frames contains enough information on an action in its video. Following \cite{DBLP:conf/cvpr/CarreiraZ17}, we divide a social video live stream into a series of 64-frame segments using a sliding window of size 64 with interval size 25 that is the number of frames in $1s$ video segment as in \cite{DBLP:journals/sensors/LeeCLW19}. For each video segment, we extract its action and audience interaction features.

\subsubsection{Action Recognition Feature}
To identify the influencer behaviours in videos, we need to capture spatial-temporal visual information. We use the existing feature selection methods, and select a suitable visual feature for this work.
A typical feature extraction is to describe a video segment as a basic processing unit (e.g. action and event) that is described as handcrafted features like HOG-3D \cite{DBLP:conf/bmvc/KlaserMS08} and iDT \cite{DBLP:conf/iccv/WangS13a}, or combined features like the combination of appearance and motion \cite{DBLP:conf/nips/SimonyanZ14,DBLP:conf/mm/YuWCZXYK20}. However, extracting features is often computationally intensive and intractable for large datasets \cite{DBLP:conf/iccv/TranBFTP15}. Combined features can be extracted using neural networks efficiently for vision related tasks such as detection and classification. However, the preprocessing (e.g. detecting region of interest) is required before the feature extraction and the feature fusion may not be robust for various datasets. Inspired by the successful application of 3D ConvNets feature on action recognition \cite{DBLP:conf/cvpr/CarreiraZ17}, we apply ResNet50-I3D to generate spatio-temporal features from video segments.
Given a video segment series $V =\{v_1,\cdots,v_M\}$, we adopt pre-trained ResNet50-I3D $\Phi_F(\cdot)$ trained on Kinetcis400
\cite{Kinetcis400} as the action feature extractor: $f_i = \Phi_F(v_i)$,
\eat{\begin{equation}
\label{model:ResNet}
    f_i = \Phi_F(v_i)
\end{equation}}where $f_i\in R^{d_1}$ is the action feature and $d_1$ is its dimension.

\subsubsection{Audience Interaction Feature}
Using action recognition features, the behaviours of influencers can be captured to identify the anomalies from influencer side. However, anomalies in social video live streams are reflected not only by the actions of influencers themselves.\eat{However, anomalies in social video live streams are not solely reflected by the actions of influencers themselves.} They are highly related to the reaction of audience to these actions. Thus, it is necessary to extract the audience interaction features to reflect how audience are engaged in a video segment. 
Intuitively, when audiences watch social live videos, they tend to comment on those interesting parts to them. The number of real-time comments at time $t$ indicates the behaviour of audience at $t$, and a bigger number of comments means more audiences are interested in the video part at this moment. Thus, we extract the audience interaction feature by quantifying the audience interactions on each video segment. Since a speaker action could last for a time period and the audience comments to a speaker action could appear over a period as well, we can use the sum aggregation or mean aggregation of comments in a time window to describe how attractive a video segment is. We normalize the audience interaction to [0, 1] space to avoid the side effect of total audience participation on the anomaly recognition. In the normalized space, given a segment, its audience interaction vectors generated by sum aggregation and mean aggregation are same. Thus we select a basic sum aggregation function which introduces low time cost for the feature generation. We accumulate real-time comments generated within a time window $W_s$ as computed by: $D_t = \sum{\hat{d}_i}, \hat{d}_i\in W_s$,
\eat{\begin{equation}\label{equ:Dt}
    D_t = \sum{\hat{d}_i}, \hat{d}_i\in W_s
\end{equation}}
where $W_s = [\hat{d}_{t-s}, ..., \hat{d}_{t+s}]$, and $\hat{d}_t$ is the number of real-time comments at moment $t$.

Given audience interaction $c_i$ of the corresponding segment, we count every $D_t$ for $t$ contained in $c_i$. Then we concatenate them as a $k$-tuple vector $(D_{i1}, \cdots,D_{ik})$, where $k$ is the number of contained moments. Considering the information within consecutive segments, we conjoin $k$-tuples of $c_{i-1}$,  $c_i$, and $c_{i+1}$ as the representation of $c_i$. To further enhance the effect of real-time comments, we extract the word embedding and sentiment as additional features and concatenate them with the $k$-tuple vector for each time point. Specifically, we first capture comments from $k$-tuples vector with python tools\cite{scrapy}. Then, we compute the average word embedding by pre-trained Word2Vec \cite{gensim} and sentiment feature by TextBlob\cite{textblob}. Following previous work\cite{DBLP:conf/cvpr/LiuLLG18}, we concatenate three features together to represent audience interaction $c_i$. This process is defined as $\Phi_D(\cdot)$, and feature extraction of audience interaction is described as: $a_i = \Phi_D(c_i)$, where $a_i\in R^{d_2}$ is the interaction feature and $d_2$ is its dimension. After obtaining features $I \in R^{M\times d_1}$ and $A\in R^{M\times d_2}$, we generate feature sequences that will be fed into the LSTM-based network as detailed in Subsection \ref{sec_clstm}. We utilize consecutive segments to generate sequence data. Each sequence of action features is defined as $s_t = \{x_{t-q}, \cdots,x_{t-1}\}$ and the sequence data $S_I = \{s_{t_1}, \cdots, s_{t_N}\}$, where $q$ is the defined length of sequence and $N < M$. Given the length $q$, input sequence data is $S_X \in R^{N\times q\times d_1}$ and  $S_A \in R^{N\times q\times d_2}$.

\subsection{The CLSTM Model}
\label{sec_clstm}

An important feature of social video live streaming platforms is the influencer and audience engagement. Audience can post comments on video segments instead of watching the video live only. Speakers can receive audiences' comments and adjust their presentations in real time accordingly instead of following a prefixed presentation style only. Thus, in this work, we consider two roles: (i) the producer and (ii) the audience, and two types of influences: (i) the influence of historical behaviours in each role on her own future behaviour; and (ii) the mutual influence between these two roles. Note that ``producer", ``influencer", ``speaker" and ``presenter" are interchangeable in this paper.

\begin{figure}[htb]
\centering\vspace{-3ex}
    \subfigure[]{
    \includegraphics[width=4.02 cm]{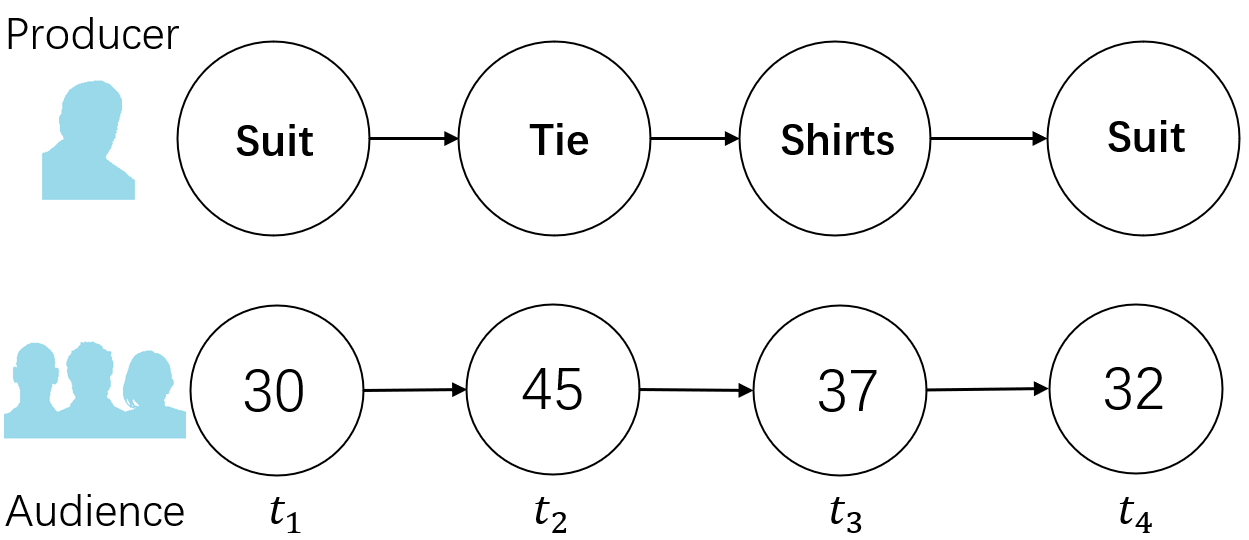}
    \label{fig:scenarios_long_term}
    }\vspace{-2ex}
    \subfigure[]{
    \includegraphics[width=4.02 cm]{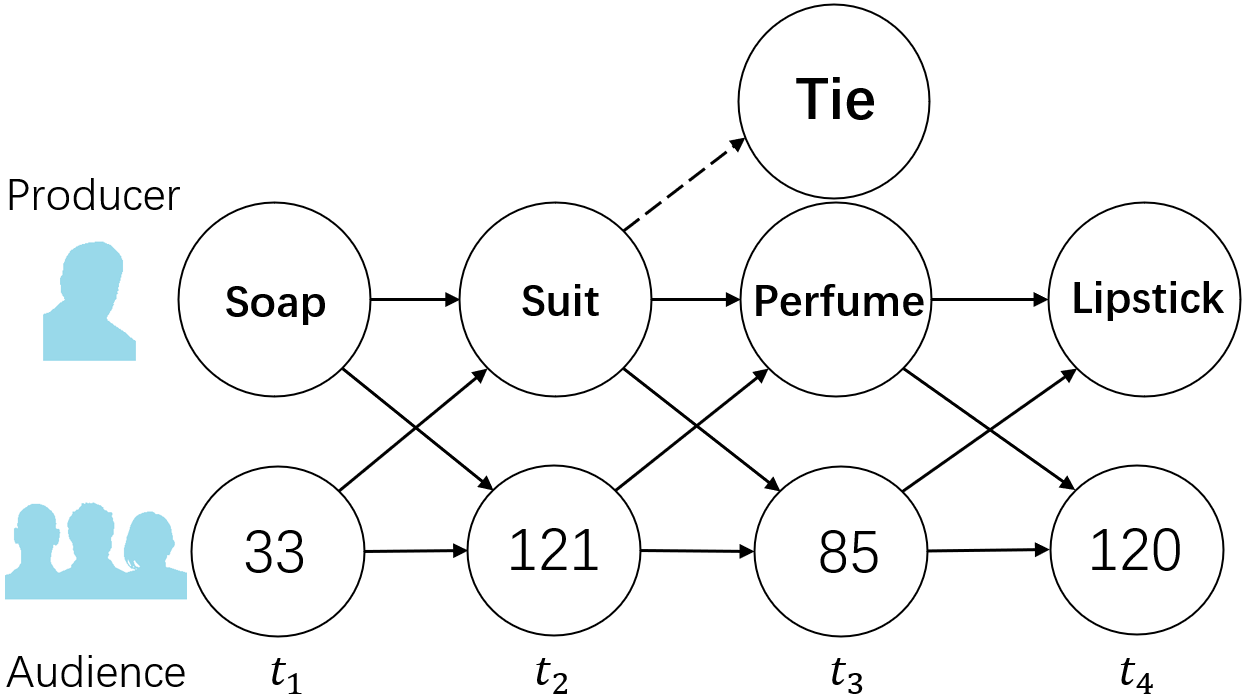}
    \label{fig:scenarios_mutual_inf}
    }
    \caption{\small The application scenario.}\vspace{-2ex}
    \label{fig:scenarios}
\end{figure}

Consider a real scenario shown in Fig. \ref{fig:scenarios}, where audience are watching live streaming created by the producer or influencer. The audience behaviour trajectory contains the number of real-time comments at consecutive time points. The influencer behaviour trajectory contains promoting an item at consecutive time points, which may follow the item pattern \cite{DBLP:conf/icde/ZhouQ0CZ19}, but with adjustment on response to the real time audience feedback in social video live streaming. Fig. \ref{fig:scenarios_long_term} shows the long-term temporal effect of influencer's behaviour and that of audience behaviour on their own future behaviours. Suppose that a fashion influencer is promoting a suit at $t_1$ time point. She would promote a tie matching the suit at $t_2$, and select a shirt matching both the suit and the tie to promote at $t_3$. The item choice for her current promotion is associated with her previous item selections. For audience, the real-time comments at $t_1$ could increase the intention of audience, leading to the increase of real-time comments at the next time points. 

To capture the temporal dependency of behaviours of each role, we need to select a good model that well captures both the long term and short term historical activities of each role. Typical deep neural network models for temporal sequential data include RNN \cite{6302929}, LSTM \cite{DBLP:journals/neco/HochreiterS97}, GRU \cite{DBLP:conf/ssst/ChoMBB14} and BERT~\cite{DBLP:journals/corr/abs-1810-04805}. Among them, RNN feeds back the output of the previous time input to the next time input in the network. This loop structure allows the information to persist in the cycles, and be adopted for sequence data. However, RNN has the problem of vanishing gradients, especially for long data sequences. BERT has been designed for text processing and is very compute-intensive at inference time, which is not suitable for social video live streaming processing. GRU is suitable for fast model training in the case of limited memory consumption, but not good for long sequence data with high accuracy requirement for this work. LSTM is a more advanced RNN that is capable of learning long-term dependencies. Thus, we select LSTM to model the behaviours of influencers and those of audience. The LSTM over influencer behaviours is denoted as $LSTM_I$, and another one over audience behaviour is denoted as $LSTM_A$.

\begin{figure}[hthb]\vspace{-2ex}
\centerline{
    \includegraphics[width=0.32\textwidth]{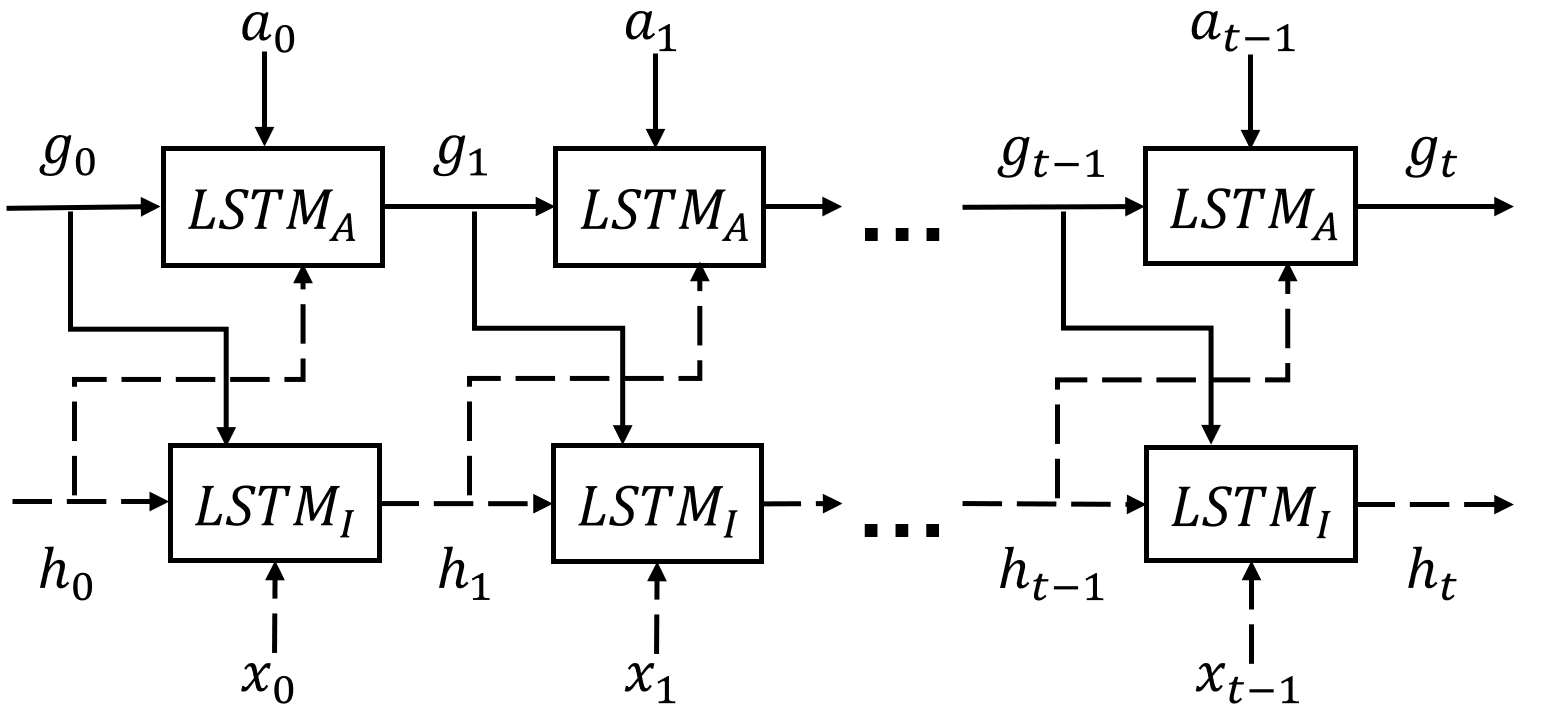}
}\vspace{-1ex}
\caption{\small CLSTM model. }\vspace{-2ex}
\label{fig:LSTM}
\end{figure}

Modelling the influencer behaviour and audience behaviour using two LSTMs separately is not enough due to the mutual influence between influencers and audience. Consider the scenario in Fig. \ref{fig:scenarios_mutual_inf}. An influencer behavioural trajectory may follow the item pattern: ``Soap, Suit, Tie, and others". Likewise, a pattern may also exist in the audience interaction behaviour in the form of abrupt quantity changes of real-time comments in real world live streaming as shown in Fig. \ref{fig:framework_whole} (a). As shown in Fig. \ref{fig:scenarios_mutual_inf}, the influencer promotes soap at time $t_1$, which attracts great attention from audience and accordingly leads to more real-time comments posted by audience at next time $t_2$ considering the possible time delay in comment input. On the other hand, influenced by the strong response to the item ``Soap" from audience, the influencer could adjust her behaviour to promote ``Perfume" instead of the original planned item ``Tie". To capture this mutual influence between influencer and audience, we propose a Coupling LSTM model (CLSTM) as shown in Fig. \ref{fig:LSTM}.

Unlike LSTM that considers the behaviours of influencer only, there are two layers in CLSTM: 
$LSTM_I$ and $LSTM_A$. $LSTM_I$ captures the pattern of normal influencer behaviours by visual spatio-temporal features, while $LSTM_A$ models the corresponding audience interaction. Let $g_t$ and $h_t$ be the hidden states at time $t$ of $LSTM_A$ and $LSTM_I$ respectively. We use arrows to show the flow of audience interaction. For example, if there is an arrow which shows $g_0$ flowing into $LSTM_A$, it means that $g_1$ will contribute to the output, $g_2$ of the $LSTM_A$. Similarly, we use dashed arrows to represent the flow of influencer action information. Considering the mutual influence between speaker action and audience interaction, hidden states will be created by both flows. For example, the speaker gives a special gesture, which attracts great attention from audience and more feedback from them as a consequence. 
Under this situation, the current hidden state $g_t$ of $LSTM_A$ is influenced by previous state $h_{t-1}$ of $LSTM_I$. 
For another, speaker could react to audience interaction. Thus, the hidden state $g_{t-1}$ influences the current hidden state $h_t$.

\subsubsection{The $LSTM_I$ for Modelling Influencer Behaviours} 
The behaviour of an influencer depends on both the historical behaviour trajectory of herself and that of the audience interaction. Thus, we build the $LSTM_I$
by considering both the trajectory of a user’s historical action features and that of her audience interaction features. Just as classic $LSTM$, $LSTM_I$ contains three gates including input gate, forget gate and output gate, and two states including cell state and hidden state. Let $h_{t-1}$ and $g_{t-1}$ be the hidden states of the previous timestamp in $LSTM_I$ and $LSTM_A$ respectively, $f_t$ the current input action recognition feature. The input gate at time $t$ is computed by:
\begin{equation}
    IG_t = \sigma (W_i[h_{t-1},g_{t-1},f_t]+ b_i)
\end{equation}
 where $\sigma$ is the Sigmoid function, $W_i$ is the weight matrix of input $f_t$ associated with hidden state and $b_i$ is a bias vector at $t$ for controlling the importance of the current cell state. Similarly, forget gate is computed by:
 \begin{equation}
    FG_t =  \sigma (W_f[h_{t-1},g_{t-1},f_t]+ b_f)
\end{equation}
where $W_f$ is the weight matrix of input $f_t$ associated with hidden state and $b_f$ is a bias vector for controlling the importance of the previous cell state. In addition, a candidate cell state is generated:
\begin{equation}
    \hat{C}_t = \tanh(W_c[h_{t-1},g_{t-1},f_t]+ b_c)
\end{equation}
where $\tanh$ is the Tanh activation function, $W_c$ is the weight matrix of input $f_t$ associated with hidden state and $b_c$ is a bias vector at $t$ for mapping the input into a representation. 

Having the input gate, forget gate, a candidate cell state at $t$ and the cell state at $t-1$, we can compute the current cell state by $C_t=  IG_t*\hat{C}_t + FG_t*C_{t-1}$. Finally, the output gate is computed by:
\begin{equation}
    OG_t = \sigma (W_o[h_{t-1},g_{t-1},f_t]+ b_o)
\end{equation}
where $W_o$ is the weight matrix of input $f_t$ associated with hidden state and $b_o$ is a bias vector for controlling the current hidden state $h_t = OG_t*\tanh(C_t)$. The weight matrices and biases are optimized by Adam optimizer.
Unlike LSTM, we take hidden state $g_t$ from another $LSTM_A$ into the cell, which utilizes information from audience behaviour. Thus, the $LSTM_I$ can be represented as:
\begin{equation}
    \label{Eq:ht}
    h_t = LSTM_I(h_{t-1}, g_{t-1},f_t)
\end{equation}

\subsubsection{The $LSTM_A$ for modelling audience interaction} 
We first build the $LSTM_A$ to model the behaviour of audience interactions by considering the history audience interaction features and the influence of influencer behaviours. $LSTM_A$ is coupled with $LSTM_I$ in terms of the model structure and the operations. Similar to $LSTM_I$, we compute three gates: input gate, forget gate and output gate, and two states: cell state and hidden state. Unlike $LSTM_I$ that takes $h_{t-1}$, $g_{t-1}$ and $f_t$ as the input of the gates and the cell state, $LSTM_A$ takes $h_{t-1}$, $g_{t-1}$ and $a_t$ as the input of its gates and cell state, where $a_t$ is the current input audience interaction feature. Given $h_{t-1}$, $g_{t-1}$ and $a_t$, we can calculate the input gate at time $t$, the forget gate, the candidate cell state, and the output gate by:
\begin{equation}
IG_t = \sigma (W_i^a[h_{t-1},g_{t-1},a_t]+ b_i^a)
\end{equation}
\begin{equation}
FG_t =  \sigma (W_f^a[h_{t-1},g_{t-1},a_t]+ b_f^a)
\end{equation}
\begin{equation}
    \hat{C}_t = \tanh(W_c^a[h_{t-1},g_{t-1},a_t]+ b_c^a)
\end{equation}
\begin{equation}
  OG_t = \sigma (W_o^a[h_{t-1},g_{t-1},a_t]+ b_o^a)
\end{equation}
Here, $W_i^a$ is the weight matrix of input $a_t$ associated with hidden state and $b_i^a$ is a bias vector at $t$ for controlling the importance of the current cell state. $W_f^a$ is the weight matrix of input $a_t$ associated with hidden state and $b_f^a$ is a bias vector for controlling the importance of the previous cell state. $W_c^a$ is the weight matrix of input $a_t$ associated with hidden state and $b_c^a$ is a bias vector at $t$ for mapping the input into a representation. $W_o^a$ is the weight matrix of input $a_t$ associated with hidden state and $b_o^a$ is a bias vector for controlling the current hidden state $h_t^a = OG_t^a*\tanh(C_t^a)$. $LSTM_A$ can be described as:
\begin{equation}
    \label{Eq:gt}
    g_t = LSTM_A(a_t,h_{t-1},g_{t-1})
\end{equation}

We use the last output $h_t$ and $g_t$ to represent the sequence data. Following that, a decoder is utilized to generate `prediction' $\hat{x}_t$ from $h_t$ and $\hat{a}_t$ from $g_t$. By doing this for each sequence, a set of predictions can be generated. The whole process is represented as: 
\begin{equation}
\label{eq:M}
    (\hat{I},\hat{A}) = M(S_I,S_A,\theta_p)
\end{equation}
where $M(\cdot)$ represents the proposed CLSTM with decoder layers, $\theta_p$ is the set of network parameters.

\subsubsection{CLSTM Training Strategy}
\label{sec:traing_strategy}
Given a video segment at time $t$, we can represent it by action recognition feature $f_t$ and audience interaction feature $a_t$. 
An action sequence $s_I$ is constructed by setting the elements of $s_I$ as the action recognition features and ordering them temporally. 
Similarly, the audience interaction sequence $s_A$ is constructed by setting the sequence elements using the temporally ordered audience interaction features. $s_I$ and $s_A$ are for the same video segment, and have the same length.
Given feature vectors $f$ in the action sequence and $a$ in the audience interaction sequence, we input them into $LSTM_I$ and $LSTM_A$ in CLSTM respectively to generate hidden states $h$ and $g$. The action sequence is converted into a sequence of hidden states $\{h_1, \cdots, h_t\}$. Similarly, we convert an audience interaction feature into its hidden vector $g$, and the corresponding sequence into a series of hidden states $\{g_1, \cdots, g_t\}$. As each of two hidden states $g$ and $h$ at a time point is mutually influenced by the previous state of the other, we train the model by jointly optimizing $LSTM_I$ and $LSTM_A$ layers.
The initial states of CLSTM parameters are randomly initialized and tuned during training. We use $h_t$ as the latent representation for the action recognition feature and $g_t$ as that for the audience interaction feature at $t$. Then, we use a decoder $De_I$ to generate $\hat{x}_t$ from $h_t$ and another decoder $De_A$ to generate $\hat{a}_t$ from $g_t$ that are the prediction of influencer behaviours and that of audience behaviours at time $t$ respectively, as described below.
\begin{equation}
    \hat{f_t} = De_I(h_t) \quad (a)\quad 
    \hat{a_t} = De_A(g_t) \quad (b)
\end{equation}
Parameters of $De_I$ and $ De_A$ will be learnt during training. We use the Adam optimizer to learn the parameters of the CLSTM. We build CLSTM over each time slot which is set to 250 frames as \cite{DBLP:conf/wacv/GeestT18}. The input of CLSTM is series of features extracted from 64-frame segments \cite{DBLP:conf/cvpr/CarreiraZ17} with sliding step $25$ \cite{DBLP:journals/sensors/LeeCLW19}. Hence the length of sequence $s_I$ and $s_A$ is set to $9$ to cover all frames in each time slot. 
Following \cite{DBLP:conf/icde/Liu0CB20,DBLP:conf/iccv/XiaCWHS15}, we adopt a reconstruction strategy for building the objective function in the training. As we train the CLSTM by jointly optimizing $LSTM_I$ and $LSTM_A$ layers to capture the mutual influence of influencer behaviours and audience behaviours, we formulate our loss function based on reconstruction error considering the interactions between $LSTM_I$ and $LSTM_A$. 
Given a $LSTM_I$, we can use different loss functions such as JS divergence, KL divergence and $L_2$ to compute the reconstruction errors. We compare the model prediction quality using different loss functions over four datasets, INF, SPE, TED, and TWI, to choose a good one, as reported in Table \ref{tab:auc_cmp_Loss}.
\begin{table}[htb] \vspace{-1ex}
    \begin{center}
        \begin{tabular}{|l|l|l|l|l|}
            \hline
            Method &  INF &SPE  & TED  & TWI\\
            \hline
            CLSTM+$L_2$         & 76.44 &60.06  &62.90 &72.21\\
            CLSTM+$KL$       &  78.12 & 62.31 & 67.78&75.26\\
            CLSTM+$JS$        & 79.88  & 64.53 & 69.05&77.40\\
            \hline
        \end{tabular}
    \end{center}
    \vspace{-2ex}
    \caption{\small AUROC under different loss functions.}\label{tab:auc_cmp_Loss}\vspace{-2ex}
\end{table}

JSE performs best as reported in Table \ref{tab:auc_cmp_Loss}. We compute the JS loss between its input and the reconstructed output by JS divergence Error (JSE) since action recognition features are probability distributions in a normalized space. Given a $LSTM_A$, we utilize Mean Squared Error (MSE) to compute the L2 loss between its input and the reconstructed output. We fuse the reconstruction errors of $LSTM_I$ and $LSTM_A$ to form an overall loss function for the $CLSTM$ as below.
\begin{equation}
\label{eq:weight}
    l(I,A) =  \omega JSE(\hat{I}, I) + (1- \omega)MSE(\hat{A}, A)
\end{equation}
where $JSE(\hat{I}, I)$ is the reconstruction error for action recognition features, $MSE(\hat{A}, A)$ is that for audience interaction features, $\omega$ is the parameter adjusting the weight of two parts in the loss function of CLSTM. We train CLSTM with the objective of minimizing the overall information loss $l(I,A)$.
Given $n$ training data, a sequence length $q$, the dimensionality of hidden states $h_1$ in $LSTM_{I}$ and that of hidden states $h_2$ in $LSTM_A$, the dimensionality of action feature $d_1$ and that of audience interaction feature $d_2$, the time cost of training CLSTM is $\mathcal{O}(n(4q(h^2_1+h^2_2)+4q(d_1h_1+d_2h_2)))$. Moreover, following \cite{DBLP:conf/cvpr/LiZG18,DBLP:conf/cvpr/HeZRS16}, we report that the parameter number is 1,382,713, the number of floating-point operations is$1.2\times 10^7$, and the model depth is $10$.
\eat{we report Parameter Number (PN), the number of FLoating-point OPerations (FLOPs), and model depth to show the model complexity in Table \ref{tab:model_complexity}. 
\begin{table}[htb] \vspace{-2ex}
    \begin{center}
        \begin{tabular}{|l|l|l|l|}
            \hline
            Method &  PN &FLOPs  & Depth  \\
            \hline
            CLSTM         & 1,382,713 &$1.2\times 10^7$  &10 \\
            \hline
        \end{tabular}
    \end{center}
    \vspace{-2ex}
    \caption{\small Model complexity. }\label{tab:model_complexity}\vspace{-4ex}
\end{table}
}

\subsection{Anomaly Detection}
\label{sec_RENS}

Using CLSTM with a decoder layer, we can predict the behaviour of influencer and that of audience interaction at the next time point. As our final goal is to identify if the next video segment is an anomaly, we need to design a scoring function based on the output of the decoder layer. Given that the trained data are normal,  CLSTM outputs the reconstructed feature generated by the model prediction and reflects the behaviours of influencer and audience interaction in the normal status. Thus the difference between the reconstructed feature and the true incoming feature at a certain time point reflects how far the incoming feature is from the normal status. This difference is called reconstruction error, and has been proved to be effective for anomaly detection \cite{DBLP:conf/cvpr/0003CNRD16}. In this work, we extend the reconstruction error measurement for single reconstructed visual feature to a pair of reconstructed action and audience interaction features with their importance considered. Given a trained model, the reconstruction error with respect to the action feature is computed by the $JS$ divergence between the true feature $f_t$ and the reconstructed feature $\hat{f}_t$, and that for the audience interaction is computed by the $L_2$ distance between the true feature and the reconstructed one. Let m=($f_t$+$\hat{f}_t$)/2. The reconstruction errors are computed by:
\begin{equation}\label{equ:jsd}
    RE_I(t) = \frac{1}{2}\sum_{i=1}^d \hat{f}_t^i log\frac{\hat{f}_t^i}{m_t^i} + \frac{1}{2}\sum_{i=1}^d f_t^i log\frac{f_t^i} {m_t^i}
\end{equation}
\begin{equation}\label{equ:L2d}
    RE_A(t) = ||\hat{a}_t - a_{t}||_2 
\end{equation}
We fuse $RE_I$ and $RE_A$ with their importance considered, and define a weighted anomaly scoring function as follow:
\begin{equation}
\label{eq:score}
    RE_{IA}(t) = \omega RE_I(t) + (1-\omega) RE_A(t)
\end{equation}
$\omega$ is a weighting parameter to be evaluated in Section \ref{sec:exp}. Given an anomaly score threshold $\tau$, a video segment with an anomaly score bigger than $\tau$ is an anomaly.

\subsection{Dynamic Updating}
\label{sec_DU}

Practically, audience interests may change over a long stream. A speaker action may attract audience attention at the beginning, but not be attractive for audience later. Also, normal video content evolves over streams, and new one appears over streams. Accordingly, existing normal video segments used for training CLSTM cannot reflect the normal behaviours of influencers and audience interactions completely, which causes the model drift. The previously trained CLSTM may not generate accurate prediction for the incoming data. Thus, we need to dynamically maintain CLSTM for high-quality anomaly detection. We propose a smooth update approach that incrementally updates the model as required for streams by monitoring the changes of incoming data and adaptively selecting update operations. 
We take the action hidden state of $LSTM_I$, $h_t$, as the data source for detecting the model drift, since it is more robust to scene changes compared with audience interaction features. Given a set $S_{h}$ of all the hidden states for historical data and a set $S_{n}$ of all the hidden states for incoming data, the drift is measured based on the cosine similarity between the hidden vectors in these two sets.
\begin{equation}\label{eq:drift} 
sim(S_{h},S_{n})=\frac{1}{\|S_{h}\|\|S_{n}\|}\sum_{i=1}^{\|S_{h}\|}\sum_{j=1}^{\|S_{n}\|}cos(h_{i},h_{j})
\end{equation}

\eat{
\begin{algorithm}[!ht]
\DontPrintSemicolon
\KwInput{ $CLSTM$: previously trained model;\;
            \hspace*{9.5 ex}$l_s$: maximal length of set $S_{n}$;\;
            \hspace*{9.5 ex}$S_I$: incoming action interaction series;\;
            \hspace*{9.5 ex}$S_A$: incoming audience interaction series;\;
            \hspace*{9.5 ex}$S_{h}$: hidden states set of history data;\;
            \hspace*{9.5 ex}$T$: threshold to label normal segments;\;
            \hspace*{9.5 ex}$\tau$: threshold to trigger update.\;
        }
\KwOutput{$CLSTM^{t}$: updated model;$S_{h}^{t}$: updated set.}
    $n_{tmp}\gets\emptyset$\;
    $S_{n} \gets \emptyset$\;
  \For{each incoming segment $n_i$}
  {
    $h_i$ $\gets CLSTM^{t-1}(n_i)$\;
    \If{normalized audience interaction $< T$}
    {
        $n_{tmp} \gets n_{tmp}\cup n_i$\;
        $S_{n} \gets S_{n} \cup h_i$ \;
    }
    \If{$|S_{n}| == l_s$}
    {
        Compute $sim(S_{h}^{t-1},S_{n})$\;
        Update audience interaction normalization\;

        \If{$sim(S_{h}^{t-1},S_{n}) < \tau$}
        {
            $CLSTM^{t}=CLSTM^{t-1}$
        }
       \Else
       {
            Train $CLSTM^{new}$ with $n_{tmp}$\;
            $CLSTM^{t}$=merge($CLSTM^{new}$, $CLSTM^{t-1}$)\;
            $n_{tmp}\gets\emptyset$ \;
       }
        $S_{h}^{t} \gets S_{h}^{t-1} \cup S_{n}$;~
        $S_{n} \gets \emptyset$\;
    }
  }
  Return $CLSTM^{t}$ and $S_{h}^{t}$.\;
\caption{\small Dynamic Updating}
\label{alg:dynamic_update}
\end{algorithm}
}
\begin{figure}[t]\centering\small\vspace{-1mm}
	\hspace*{-0.0cm}\fbox{
		\begin{minipage}{82mm}		\textbf{input:} $CLSTM$: trained model;~$l_s$: maximal length of set $S_{n}$;\\
		 \hspace*{6.5ex} $S_I$, $S_A$: incoming action/ audience interaction series;\\
          \hspace*{6.5ex} $S_{h}$: hidden states set of history data;\\ 
            \hspace*{6.5ex} $T$,$\tau_u$: threshold to label normal segments/ trigger update.\\
		~~~~~~~\textbf{output:} $CLSTM^{t}$: updated model;            ~$S_{h}^{t}$: updated set.\\
            \hspace*{0.5ex} 1. $n_{tmp}\gets\emptyset$;~$S_{n} \gets \emptyset$; \\
			\hspace*{0.5ex} 2. \textbf{for} each incoming segment $n_i$\\
			\hspace*{0.5ex} 3. \hspace*{2.5ex} $h_i$ $\gets CLSTM^{t-1}(n_i)$\\
			\hspace*{0.5ex} 4. \hspace*{2.5ex} \textbf{if} normalized audience interaction $< T$\\
			\hspace*{0.5ex} 5. \hspace*{6.5ex} $n_{tmp} \gets n_{tmp}\cup n_i$;\quad
            $S_{n} \gets S_{n} \cup h_i$\\
			\hspace*{0.5ex} 6. \hspace*{2.5ex} \textbf{if} $|S_{n}| == l_s $\\
			\hspace*{0.5ex} 7. \hspace*{6.5ex} Compute$sim(S_{h}^{t-1},S_{n})$; UpdateAudiInteractNorm\\
            \hspace*{0.5ex} 8. \hspace*{2.5ex} \textbf{if} $sim(S_{h}^{t-1},S_{n}) > \tau_u$ \\
            \hspace*{0.5ex} 9. \hspace*{6.5ex} $CLSTM^{t}=CLSTM^{t-1}$\\
            10. \hspace*{2.5ex} \textbf{else}\\
            11. \hspace*{6.5ex} Train $CLSTM^{new}$ with $n_{tmp}$;\\
            12. \hspace*{6.5ex} $CLSTM^{t}$=merge($CLSTM^{new}$, $CLSTM^{t-1}$)\\
            13. \hspace*{6.5ex} $n_{tmp}\gets\emptyset$ ;\\
            14. \hspace*{2.5ex} $S_{h}^{t} \gets S_{h}^{t-1} \cup S_{n}$;\quad
            $S_{n} \gets \emptyset$\\
			15. Return $CLSTM^{t}$; $S_{h}^{t}$ 
			\end {minipage}
		} \vspace{-1mm}
  \caption{\small Dynamic updating}\label{alg:dynamic_update}\vspace{-2ex}
\end{figure}

Fig. \ref{alg:dynamic_update} shows the model update algorithms. Given a CLSTM trained previously, incoming action and audience interaction sequences, history hidden state set, length of set $S_{n}$, threshold to label normal segments and threshold for triggering update, our algorithm performs the dynamic update in four steps: (1) use audience interaction to label normal segments for selecting candidate states (lines 4-5); (2) use a trigger to identify the occurrence of model drift (lines 6-8); (3) update the model by merging the previous model and the new model trained by the incoming segments if the drift occurred (lines 11-13); (4) update the hidden states set of history data by appending the hidden states set of incoming data (line 14). We initialize an empty set $n_{tmp}$ to store the incoming segments to be trained in the future, and an empty set $S_n$ to store the corresponding hidden states of $n_{tmp}$ (line 1). For each incoming video segment $n_i$, we first get its hidden state $h_i$ based on previous trained $CLSTM^{t-1}$ (line 3). Then we recursively filter segments with low audience interaction (i.e $<T$), add them into $n_{tmp}$ and their hidden states into $S_n$, until the size of $|S_{n}|$ reaches the preset maximal length $l_s$ of $S_n$ (lines 4-5). We compute the similarity between  $S_{h}^{t-1}$ and $S_{n}$ using Eq.\ref{eq:drift} to trigger the drift of CLSTM (lines 6-8). If the drift is not triggered, we keep current model (line 9). Otherwise, previous $CLSTM$ is updated (lines 11-13). To obtain a new model, we first train $CLSTM^{new}$ with segments in temporary set $n_{tmp}$, and then construct the updated $CLSTM^{t}$ by merging the previous  $CLSTM^{t-1}$ and the current trained  $CLSTM{new}$ (line 12). We empty the used $n_{t p}$ (line 13) and update $S_h^{t-1}$ with $S_n$, and empty $S_n$ for next time model update (line 14). With our dynamic update algorithm, the CLSTM can be maintained incrementally, which avoids the costly model re-training operations to keep the high-quality model prediction. As discussed in Section \ref{sec:traing_strategy}, the time cost of re-training CLSTM is $\mathcal{O}(n(4q(h^2_1+h^2_2)+4q(d_1h_1+d_2h_2)))$. The time cost of CLSTM training depends on the size of training data $n$. The incremental updating allows the CLSTM to be trained over the incoming data only, which incurs lower time cost compared with the CLSTM re-training over the whole dataset.

\eat{ Given $n$ training data, a sequence length $q$, the dimensionality of hidden states $h_1$ in $LSTM_{I}$ and that of hidden states $h_2$ in $LSTM_A$, the dimensionality of action feature $d_1$ and that of audience interaction feature $d_2$, the time cost of re-training CLSTM is $\mathcal{O}(n(4q(h^2_1+h^2_2)+4q(d_1h_1+d_2h_2)))$. The time cost of CLSTM training depends on the size of training data $n$. The incremental updating allows the training to be performed over the incoming data only, leading to much lower time cost comparing with the CLSTM re-training strategy over the whole training set.
}

\section{Anomaly Detection Optimization}\label{sec:opti} 

In this work, each action recognition feature has 400 dimensions. When using JSE measure for reconstruction error calculation, the time cost for anomaly detection becomes high due to the high dimensionality of the features. We propose adaptive dimension group and adaptive optimisation to select the suitable matching strategy for each video segment.

\subsection{ADG-based Dimensionality Reduction}
We first propose a novel $Adaptive$ $Dimension$ $Group$ representation (ADG) for action recognition features. Then we propose an upper bound and show its non-trivial bound lemma. 

\subsubsection{Adaptive Dimension Group}

To reduce time cost and construct ADG for action recognition features, we transform each normalized feature vector into feature sets, each of which is described as a pair of its maximal and minimal values. To this end, we partition a dimension of feature space into variable sized segments based on the feature distributions. Each feature dimension is valued in (0, 1) space. Statistically, the smaller dimension values are distributed denser than the bigger ones. Thus we divide a dimension space into two equal parts, and recursively conduct the binary partition over the part with smaller values in each loop, until $n$ subspaces are obtained. Fig. \ref{fig:DSPar} (a) shows dimension space partition. The group $ids$ are stored in an array as shown in Fig. \ref{fig:DSPar} (b).

\begin{figure}[hthb]\vspace{-2ex}
\centering
    \subfigure[]{
    \includegraphics[width=3.5 cm]{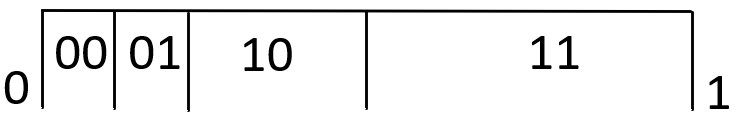}
    }
    \subfigure[]{
    \hspace{3ex}\includegraphics[width=3.6 cm]{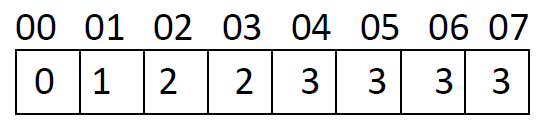}
    }\vspace{-2ex}
    \caption{\small Dimension partition vs. mapping to group $id$.}\vspace{-2ex}
    \label{fig:DSPar}
\end{figure}

Given an action recognition feature, our dimensionality reduction includes two steps. First, each feature is mapped into a set of feature groups by hash mapping using the function $h(k)=floor(k\times  2^{n-1})$. The hash key indicates the index of the array where the group $id$ to the current dimension value is obtained. Then each group is described as a pair of its minimal and maximal values, $<f_{min}^t, f_{max}^t>$. 
We conduct statistics over INF dataset by computing the minimal feature contribution (MFC) of each dimension to the final JSE calculation. As reported in Table \ref{tab:hashsubspectNum}, when we have 20 subspaces, the MFC value is close to 0. Thus, we set $n$ to 20. 

\begin{table}[htb]\vspace{-1ex} 
    \begin{center}
        \begin{tabular}{|l|c|c|c|c|c|c|}
            \hline
          $n$ & 15  & 16 & 17 & 18  & 19 & 20\\
            \hline
          MFC &0.04  & 0.02   & 0.017 & 0.012  & 0.007& 0.004\\
            \hline
        \end{tabular}
    \end{center}
    \vspace{-2ex}
    \caption{\small Filtering power of bounds.}
    \vspace{-2ex}
    \label{tab:hashsubspectNum}
\end{table}

\subsubsection{Bounding Measures}
Given an action recognition feature $f$ and its predicted feature $\hat{f}$, we define an upper bound measure of the $RE_I$ by calculating it over their representations. Let $f_t$ and $\hat{f}_t$ be two dimension groups in $f$ and $\hat{f}$, each containing $m$ corresponding dimensions. We measure their difference by:
\begin{equation}
    RE_I^{g_i}=\frac{m}{2}log\frac{max(f_{max},\hat{f}_{max})*min(f_{min},\hat{f}_{min})}{M_{min}*M_{max}}
\end{equation}

\noindent where $M_{min}$ is the minimal value of $(f+ \hat{f})/2 $. Thus, their overall representation difference is:
    $RE_I^G=\sum RE_I^{g_i}$.

\begin{theorem}\label{theo:Upmax}
$RE_I^G$ is an upper bound of $\widetilde{RE_I}$ between the action recognition feature $f$ and its estimated feature $\hat{f}$.
\end{theorem}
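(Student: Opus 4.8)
The plan is to use that the JS-type reconstruction error $\widetilde{RE_I}$ between $f$ and $\hat f$ is \emph{separable over coordinates}: with $m^{i}=(f^{i}+\hat f^{i})/2$ one has $\widetilde{RE_I}=\sum_{i=1}^{d}D_i$, where $D_i=\frac{1}{2}\hat f^{i}\log\frac{\hat f^{i}}{m^{i}}+\frac{1}{2}f^{i}\log\frac{f^{i}}{m^{i}}=\frac{1}{2}a_i\log\frac{a_i}{m^i}+\frac{1}{2}b_i\log\frac{b_i}{m^i}$ on writing $a_i=\max(f^{i},\hat f^{i})$ and $b_i=\min(f^{i},\hat f^{i})$, so that $a_i+b_i=2m^i$ and $a_i\ge m^i\ge b_i$. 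Since ADG partitions $\{1,\dots,d\}$ into disjoint dimension groups and $RE_I^{G}=\sum_{g}RE_I^{g}$, it suffices to establish the per-group inequality $\sum_{i\in g}D_i\le RE_I^{g}$ for an arbitrary group $g$ with $|g|=m$, and then add over groups. This reduces the claim to a statement about the $m$ coordinates of one group and the four scalars $f_{\min},f_{\max},\hat f_{\min},\hat f_{\max}$ that ADG stores for it, plus the induced midpoint extremes $M_{\min}=\min_{i\in g}m^{i}$ and $M_{\max}=\max_{i\in g}m^{i}$.

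Next I would bound a single $D_i$ for $i\in g$ by replacing each coordinate value with the appropriate group extreme. The representation guarantees the bracketing relations $b_i\ge\min(f_{\min},\hat f_{\min})$, $a_i\le\max(f_{\max},\hat f_{\max})$, and $M_{\min}\le m^{i}\le M_{\max}$ (the last two because $m$ is the coordinatewise average of $f$ and $\hat f$, so its range lies inside $[\min(f_{\min},\hat f_{\min}),\max(f_{\max},\hat f_{\max})]$). The nonnegative part $\frac{1}{2}a_i\log\frac{a_i}{m^{i}}$ is increasing in $a_i$ and decreasing in $m^{i}$ on the region $a_i\ge m^{i}$, so it can only grow when $a_i$ is pushed up to $\max(f_{\max},\hat f_{\max})$ and $m^{i}$ down to $M_{\min}$; using $a_i\le 1$ (the action features form a probability vector) then lets the leading factor be dropped. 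The nonpositive part $\frac{1}{2}b_i\log\frac{b_i}{m^{i}}$ is handled by pushing $b_i$ down to $\min(f_{\min},\hat f_{\min})$ and $m^{i}$ up to $M_{\max}$, in whichever pairing keeps the estimate an over-estimate. Carried through, each $D_i$ is bounded by $\frac{1}{2}\log\frac{\max(f_{\max},\hat f_{\max})\,\min(f_{\min},\hat f_{\min})}{M_{\min}M_{\max}}$, and summing over the $m$ coordinates of $g$ yields exactly $RE_I^{g}$; summing over $g$ gives $\widetilde{RE_I}\le RE_I^{G}$.

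The step I expect to be the main obstacle is the sign bookkeeping of the two logarithmic terms inside $D_i$. Substituting group extremes into the nonnegative part is harmless, but for the nonpositive part replacing $b_i$ and $m^{i}$ by extremes moves one factor in the favourable direction and the other in the unfavourable direction, so term-by-term monotonicity does not directly apply; one must fix the pairing of $M_{\min},M_{\max}$ with the two sub-terms so that the combined estimate is still an over-estimate. I would reduce the remaining inequality to a one-variable statement — in the ratio $a_i/b_i$, with $m^{i}$ pinned to $(a_i+b_i)/2$ and $a_i,b_i$ constrained to $0<b_i\le a_i\le 1$ — and close it by a short monotonicity/sign argument, checking in particular that the crude bounds $f^{i},\hat f^{i}\le 1$ are tight enough to absorb the leading coefficients without overshooting the claimed form. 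The coordinate-separable decomposition and the bracketing relations of the first two steps are routine once that residual scalar inequality is in hand.
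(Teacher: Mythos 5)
Your overall route---writing $\widetilde{RE_I}=\sum_i D_i$ with $D_i=\tfrac12\hat f^i\log(\hat f^i/m^i)+\tfrac12 f^i\log(f^i/m^i)$, reducing to a per-group inequality, and bounding each coordinate's contribution by the stored group extremes---is exactly the paper's route: its displayed inequalities \eqref{equ:max}--\eqref{equ:maxmin_all} perform the same substitution of $\max(f_{max},\hat f_{max})/M_{min}$ and of the min-part against $M_{max}$, and then multiply by $m/2$. Your handling of the nonnegative sub-term is correct: $a_i\le 1$ lets you drop the coefficient, and $a_i\le\max(f_{max},\hat f_{max})$, $m^i\ge M_{min}$ push the logarithm upward.

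The step you defer, however, is not a routine sign argument---it is where the argument breaks, and it cannot be closed in the generality you state. You need $\tfrac12 b_i\log(b_i/m^i)\le\tfrac12\log\bigl(\min(f_{min},\hat f_{min})/M_{max}\bigr)$, but the bracketing relations give $b_i/m^i\ge\min(f_{min},\hat f_{min})/M_{max}$, hence $\log(b_i/m^i)\ge\log\bigl(\min(f_{min},\hat f_{min})/M_{max}\bigr)$, and multiplying a nonpositive logarithm by the coefficient $b_i\le1$ only raises it further; both available moves therefore yield a \emph{lower} bound on the nonpositive part, and no pairing of $M_{min},M_{max}$ with the two sub-terms reverses this. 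Concretely, for a group consisting of a single coordinate with $f^i=0.5$, $\hat f^i=0.1$ (so $m^i=0.3$), one gets $D_i\approx 0.073>0$ while $RE_I^{g}=\tfrac12\log\frac{0.5\times0.1}{0.3\times0.3}<0$, so the residual scalar inequality you reduce to is false. The only valid substitution is to bound the nonpositive sub-term by $0$, which produces the weaker group bound $\tfrac m2\log\bigl(\max(f_{max},\hat f_{max})/M_{min}\bigr)$ rather than $RE_I^{g}$. (The paper's own proof asserts the corresponding chain in \eqref{equ:min} with the coefficients $f^i,\hat f^i$ dropped in the unfavourable direction and with the max/min roles garbled, so it glosses over precisely the step you flagged; your proposal correctly isolates the crux but does not supply---and as stated cannot supply---the missing inequality.)
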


\begin{proof}
For a dimension $i$ in a group $g_t$, if $f_i\geq\hat{f}_i$, we have:
\begin{equation}\label{equ:max}
log(max(f_{max},\hat{f}_{max})/M_{min})\geq log(max(f_{i},\hat{f}_{i})/M)\geq 0   
\end{equation}
\begin{equation}\label{equ:min}
0\geq log(max(f_{min},\hat{f}_{min})/M_{max})\geq log(max(f_{i},\hat{f}_{i})/M)   
\end{equation}
Combine Eq.\ref{equ:max}-\ref{equ:min}, we have 
\begin{equation}\label{equ:maxmin}
\begin{array}{l}
log\frac{max(f_{max},\hat{f}_{max})}{M_{min}}+log\frac{max(f_{min},\hat{f}_{min})}{M_{max}}\geq \\ log\frac{max(f_{i},\hat{f}_{i})}{M}+log\frac{max(f_{i},\hat{f}_{i})}{M}  
\end{array}
\end{equation}
Similarly when $f_i<\hat{f}_i$, Eq.\ref{equ:maxmin} holds. Thus for the whole group with $m$ dimensions, we have:
\begin{equation}\label{equ:maxmin_all}
\begin{array}{l}
\frac{m}{2}log\frac{max(f_{max},\hat{f}_{max})}{M_{min}}+\frac{m}{2}log\frac{max(f_{min},\hat{f}_{min})}{M_{max}}\geq \\ \frac{m}{2}log\frac{max(f_{i},\hat{f}_{i})}{M}+\frac{m}{2}log\frac{max(f_{i},\hat{f}_{i})}{M}  
\end{array}
\end{equation}
Thus we have 
    $RE_I^G\geq RE_I$. We can conclude that 
$RE_I^G$ is an upper bound of $RE_I$.
\end{proof}

As in \cite{61115}, we also use the $L_1$-based bounds. Upper bound $JS_{max}$ and lower bound $JS_{min}$ of JS divergence in terms of $L_1$ norm can be described by:
$JS_{max}(P,Q) \leq 0.5*{\|P-Q\|_1}$ and $JS_{min}(P,Q) \geq 0.125*{\|P-Q\|^2_1}$. Our ADG-based upper bound and $L_1$-based bounds can be used for filtering jointly to optimize the detection time cost.

\subsection{Adaptive Optimisation}
Once we obtain the lower bound and upper bound measures, we can use them to filter out the potential normal or abnormal video segments without the final $RE_I$ calculation in anomaly identification. Each bound has different filtering power, leading to different performance improvements. We conduct statistics over a sample dataset of 8921 video segments, and obtain the filtered sample numbers of three bound measures, $JS_{max}$=2273, $JS_{min}$=1490 and $RE_I^G$=3870.\eat{The statistics on the filtering power of three bound measures, $JS_{max}$, $JS_{min}$ and $RE_I^G$ on a sample dataset including 8921 video segments is detailed in Table~\ref{tab:filter_power}.}
Clearly, the ADG-based bound and the combination of $L_1$-based bounds have filtered similar number of samples, which indicates the unignorable filtering power of each bound. Thus, we need to design a strategy to integrate all the bounds in a smart way to jointly improve the performance of anomaly detection. 
\eat{
\begin{table}[htb]\vspace{-1ex} 
    \begin{center}
        \begin{tabular}{|l|c|c|c|}
            \hline
          Bounds & $JS_{min}$ &  $JS_{max}$ &$RE_I^G$\\
            \hline
           Filtered samples &   1490 &2273  &  3870\\ 
            \hline
        \end{tabular}
    \end{center}
    \vspace{-2ex}
    \caption{\small Filtering power of bounds.}
    \vspace{-3ex}
    \label{tab:filter_power}
\end{table}
}

\eat{
\begin{algorithm}[!ht]
\DontPrintSemicolon
\KwInput{ $CLSTM$: previously trained model;\;
            \hspace*{9.5 ex}$T_1$,$T_2$: ADOS thresholds;\;
            \hspace*{9.5 ex}$S_I$: incoming action interaction series;\;
            \hspace*{9.5 ex}$S_A$: incoming audience interaction series;\;
            \hspace*{9.5 ex}$T_n$,$T_a$: normal/anomaly thresholds;\;
        }
\KwOutput{$S_{n}$: a set of abnormal video segments.}
    $S_{n} \gets \emptyset$\;
  \For{each incoming segment $s_i$ with feature $f_i$}
  {
    $\hat{f}_i$ $\gets CLSTM(f_i)$\;
        $tF=tFunc(f,\hat{f})$\;
    \If{ $T_1\leq tF \leq T_2 $} 
    {
        Compute $JS_{max}(f_i,\hat{f}_i)$, $JS_{min}(f_i,\hat{f}_i)$\;
    }
    \If{$JS_{max}<T_n$}
    {
        Continue\;
    }
    \Else
    {
        \If{$JS_{min}>T_a$}
        {
            $S_{n} \gets S_{n} \cup s_i$ \;
            Continue\; 
        }
    }
    $RE_{max}\gets RE_I^G (ADG(f),ADG(\hat{f}))$\;
    \If{ $RE_{max} > T_n$}
    {
        $RE\gets RE_I$\;
         \If{$RE>T_a$}
         {
            $S_{n} \gets S_{n} \cup s_i$\;
         }
    }
  }
  Return $S_{n}$.\;
\caption{\small Anomaly detection with ADOS filtering.}
\label{alg:ADOS-AOVLIS}
\end{algorithm}
}

\begin{figure}[t]\centering\small
	\hspace*{-0.0cm}\fbox{
		\begin{minipage}{82mm}		\textbf{input:} $CLSTM$: trained model; ~\quad$T_1, T_2$: ADOS thresholds;\\
          \hspace*{6.5ex} $S_I$, $S_A$: incoming action/ audience interaction series;\\
          \hspace*{6.5ex} $T_n, T_a$: normal/anomaly thresholds;\\
		~~~~~~~\textbf{output:} $S_n$: a set of abnormal video segments.\\
            \hspace*{0.5ex} 1. $S_{n} \gets \emptyset$; \\
			\hspace*{0.5ex} 2. \textbf{for} each incoming segment $s_i$ with feature $f_i$\\
			\hspace*{0.5ex} 3. \hspace*{2.5ex} $\hat{f}_i$ $\gets CLSTM(f_i)$;\quad
        $tF=tFunc(f,\hat{f})$ \\
			\hspace*{0.5ex} 4. \hspace*{2.5ex} \textbf{if} $T_1\leq tF \leq T_2 $ \quad /* not skip $L_1$-based filtering */\\
			\hspace*{0.5ex} 5. \hspace*{6.5ex} Compute $JS_{max}(f_i,\hat{f}_i)$, $JS_{min}(f_i,\hat{f}_i)$; \\
			\hspace*{0.5ex} 6. \hspace*{2.5ex} \textbf{if} $JS_{max}<T_n$\\
			\hspace*{0.5ex} 7. \hspace*{6.5ex} Continue; \quad /*go to process next segment */\\           
			\hspace*{0.5ex} 8. \hspace*{2.5ex} \textbf{else if}  $JS_{min}>T_a$ \quad /*is an anomaly */ \\
            \hspace*{0.5ex} 9. \hspace*{10.5ex} $S_{n} \gets S_{n} \cup s_i$; Continue;\\     
            10. \hspace*{2.5ex} $RE_{max}\gets RE_I^G (ADG(f),ADG(\hat{f}))$;\\
            11. \hspace*{2.5ex} \textbf{if} $RE_{max} > T_n$ \\
            12. \hspace*{6.5ex} $RE\gets RE_I$;\\
            13. \hspace*{6.5ex} \textbf{if} $RE>T_a$\\
		  14. \hspace*{10.5ex} $S_{n} \gets S_{n} \cup s_i$\\
            15. Return $S_{n}$
			\end {minipage}
		} \vspace{-1mm}
  \caption{\small Anomaly detection with ADOS filtering.}
\label{alg:ADOS-AOVLIS}\vspace{-4mm}
\end{figure}

A naive method to integrate these three bounds is to simply apply $JS_{min}$, $JS_{max}$ and $RE_I^G$ to the filtering process one by one. However, there are some cases under which the $L_1$-based bounds do not work. If we can predict these cases, we can save time for the calculation of $L_1$ distance of features in the filtering, leading to better overall performance. To achieve this, we design an ADaptive Optimisation Strategy (ADOS) that enables the system to select the best bounds to optimize the final anomaly identification. In ADOS, it is important to define a trigger to automatically decide a suitable bound to be selected for filtering. 
We define the trigger function based on the characteristics of video segment features. Intuitively, for the 400-dimensional action recognition feature vectors, the sum of all dimension values equals 1, and only 1-3 dimension values are bigger than 0.1. Given an action recognition feature $f$ and its estimated one $\hat{f}$, if the dominant dimension value of $f$ is close to the value of $\hat{f}$ on the corresponding dimension, it indicates $f$ and $\hat{f}$ have a similar distribution in terms of their dominant dimension values and positions. Otherwise, they do not have similar distributions. Thus, we simply define a trigger function based on the difference between the features on the dominant dimension $i$ of the $f$, which is computed by:
\begin{equation}
    tFunc(f,\hat{f})=\|f_{i}-\hat{f}_i\|
\end{equation}

When tFunc value $tF$ is smaller than a threshold $T_1$, $JS_{max}$ will be very small, which is not suitable for filtering out the normal video segments. If $tF$ is bigger than a threshold $T_2$, $JS_{min}$ will be very big, which is unsuitable for filtering out the anomaly. Thus, only if $tF$ is a value within $(T_1, T_2)$, we will apply $L_1$-based bound for filtering. Fig.\ref{alg:ADOS-AOVLIS} details the whole anomaly detection process with ADOS filtering. Given a series of video segments described as a pair of series $S_I$ and $S_A$, our algorithm performs three main steps. First, ADOS is conducted to trigger the $L_1$-based filtering (lines 2-10). Then, for the segments that could not be filtered by $L_1$-based bounds, we conduct ADG-based filtering (lines 11-12). Following that, for the unfiltered video segments, we compute the RE distance between the original feature and the estimated one, and output the segment if it is an anomaly (lines 13-14). The set of anomaly $S_n$ is finally returned (line 15).

\section{Effectiveness Evaluation}\label{sec:exp}
We evaluate the effectiveness and efficiency of AOVLIS for anomaly detection over social video live streaming.

\subsection{Experimental Setup}
\label{sec:exp:setup}
We conduct experiments over 212 hours social videos $D_w$ from two sources: 
(1) A set of 84 hours Bilibili videos, containing 42 hours original live streams downloaded from Bilibili \cite{bilibili} and 42 hours transformed copies generated by horizontally flipping these original streams; (2) A 128 hours video set including 64 hours original videos from Twitch \cite{twitch} and 64 hours transformed copies with the horizontally flipping.
The Bilibili video set includes three types of videos that constitute three datasets: (1) INF consisting of 31 hours of influencer videos; (2) SPE including 21 hours of speech videos; (3) TED including 32 hours of TED videos. 
Videos downloaded from Twitch constitute the fourth dataset: (4) TWI including 128 hours of Twitch videos. Here, an influencer video is a type of sponsored content created by an influencer to promote a brand. A speech video contains a formal presentation/talk held by the speakers. A TED video is created by expert speakers on education, business, science, tech, and creativity. 
A Twitch video is mainly about an influencer playing games.
We preprocess the original videos by resizing the frame rate to 25 fps and horizontally flipping frames to augment data. We select videos 40 hours from four datasets (6h for $S_I$, 4h for $S_S$, 6h for $S_T$, 24h for $S_W$) for testing. The ground truth is obtained by the user study involving 5 assessors majoring in computer science based on anomaly definition and inter-subject agreement on the relevance judgments.

We compare our CLSTM with three anomaly detection approaches LTR \cite{DBLP:conf/cvpr/0003CNRD16}, VEC \cite{DBLP:conf/mm/YuWCZXYK20}, and RTFM \cite{DBLP:conf/iccv/TianPCSVC21}, and two LSTM-based alternatives LSTM and CLSTM-S. LTR is an autoencoder-based method that reconstructs regular motion with low error. VEC trains DNNs to infer the erased patches from incomplete video events series. RTFM is a state-of-the-art video anomaly detection that uses a temporal feature magnitude to learn more discriminative features.
LSTM in this evaluation applies the LSTM over the action features only. CLSTM-S captures both action feature and audience interaction feature but only considers the single-way impact of influence behaviour on audience behaviour.

The effectiveness of our model is evaluated in terms of two popular metrics used for measuring the anomaly detection quality: receiver operating characteristic (ROC) curve and corresponding area under ROC curve (AUROC). ROC is a probability curve which is produced by measuring TPR (true positive rate) of the anomaly detection at corresponding FPR (false positive rate) points from 0 to 1. AUROC represents the degree or measure of separability, showing the discriminatory power of the model. Following \cite{DBLP:conf/cvpr/0003CNRD16}, the evaluation on the epoch parameter training is measured by the reconstruction error, $R_e$, to decide the model divergence.

We evaluate the efficiency of AOVLIS in terms of the average time cost of each video segment for anomaly detection that includes the feature extraction and reconstruction computation over the whole stream. We define a new metric, filtering power ($fp$),  to evaluate the filtering capability of bound measures. $fp$ is computed by:
$fp=\frac{the~number~of~filtered~segments}{the~ number~ of ~total~ segments}$. 
CLSTM is implemented using the Pytorch. With a learning rate of 0.001, the Adam optimizer is used considering its computing efficiency and low memory cost. By varying threshold $\tau$ from $(0,1)$, we test the effectiveness at different $\tau$ values and achieve the optimal $\tau$ values $0.182, 0.097, 0.052, 0.058$ for the INF, SPE, TED, and TWI, respectively. $T_a$ is calculated using Eq.\ref{eq:score}. 
We obtain $T_n=0.7*T_a$ with optimal model performance. For the dynamic update, we search the optimal maximal set size $l_s$ by setting it as $[100, 200, 300, 400]$ and the optimal triggering threshold $\tau_u$ by setting it to a value in $(0,1)$ for effectiveness test, and achieve the optimal $l_s = 300$ and $\tau_u=0.4$. The threshold $T$ for filtering anomalies is the average normalized audience interactions of the previous time slot.
The tests are conducted on an Intel i5, 2.30GHz processor machine with a 4 GB NVIDIA GTX 1050Ti graphics card.
The source code and data have been made available \footnote{\scriptsize{\url{https://1drv.ms/f/s!AiVqBKWYV6J8hRgAOSS3djkb6AEH?e=Q7TpdW}}}.

\subsection{Effectiveness Evaluation}
We first evaluate the effects of parameters: $P$ and $\omega$, during training, then compare our solution with the SOTA methods, followed by the test on the effect of dynamic update.
\eat{followed by the test on the audience interaction prediction and the effect of dynamic update.}

\eat{
\begin{figure}[htb]
\centering
    \subfigure[Influencer]{
    \includegraphics[width=2.6 cm]{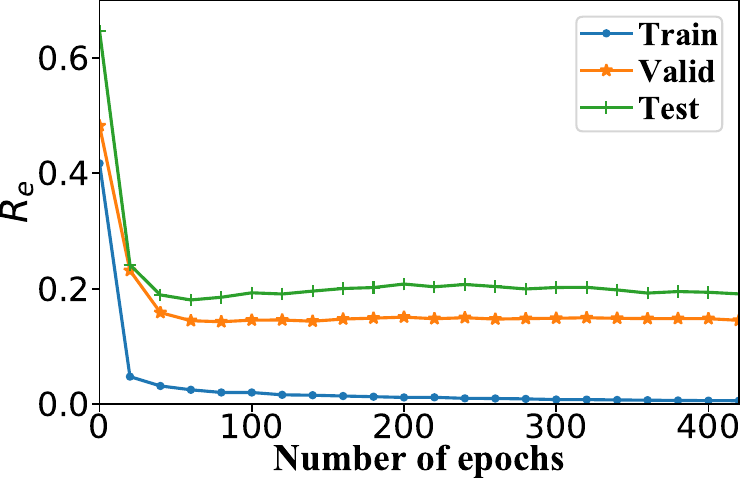}
    }
    \subfigure[Speech]{
    \includegraphics[width=2.6 cm]{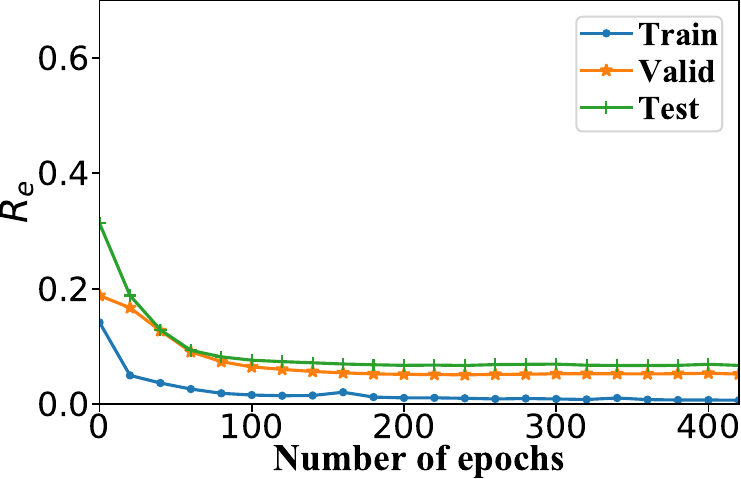}
    }
    \subfigure[TED]{
    \includegraphics[width=2.6 cm]{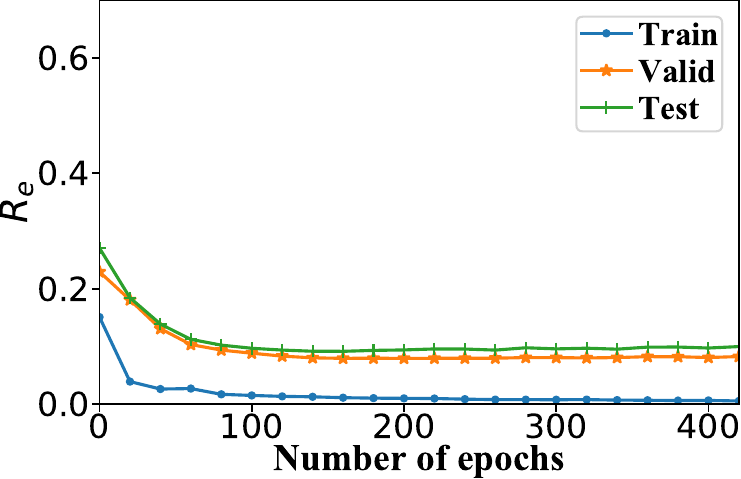}
    }\vspace{-2ex}
    \caption{\small Effect of epoch.}\vspace{-2ex}
    \label{fig:parameters_epoch}
\end{figure}
}

\subsubsection{Effect of Epoch}

We evaluate the effect of $P$ on the reconstruction error $R_e$ during CLSTM training over four video datasets by varying the $P$ from 50 to 1000. We divide the normal segments into training set ($75\%$) and validation set ($25\%$), and construct a test set from abnormal segments. Fig. \ref{fig:parameters_epoch} shows the $R_e$ value changes of training set, validation set and test set over different epoch numbers. Clearly, the $R_e$ of normal segments is remarkably different from that of anomalies, thus it is reasonable to use the learned patterns of normal data as the detector for finding anomalies. Meanwhile, for each of the datasets, the $R_e$ of training set decreases and reaches close to 0. For the validation set, the $R_e$ slightly climbs after $EPOCH=300$ due to the over-fitting in learning. \eat{This is caused by the over-fitting in learning, where the model cares too much on minimizing the training loss and performs poorly over unseen data.} Thus, we set $EPOCH=1000$ as the maximum epoch for training the model to well trade-off the loss of training set and the over-fitting of validation set, save the model every 50 epochs and test on valid set. The model getting best performance will be used as the final model for test phase.

\eat{
 \begin{figure}[htb]
    \centering
    \subfigure[ Effect of $\omega$]{
    \includegraphics[width=2.6 cm]{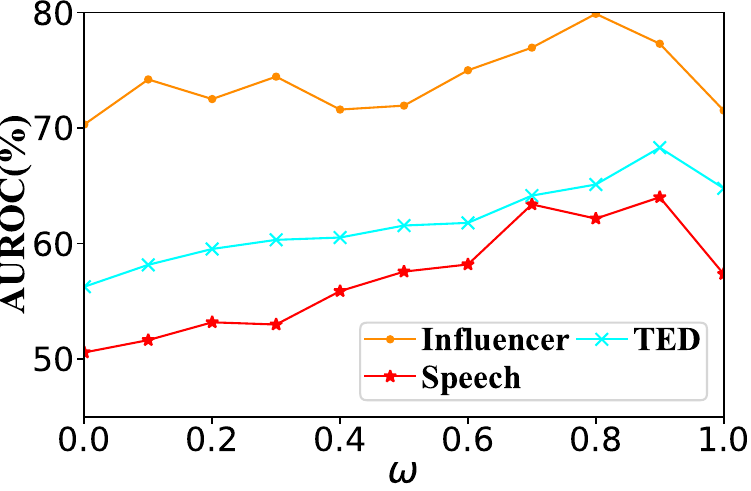}
    \label{fig:weightAUC}
    }
    \subfigure[ AUROC comp.]{
    \includegraphics[width=2.6 cm]{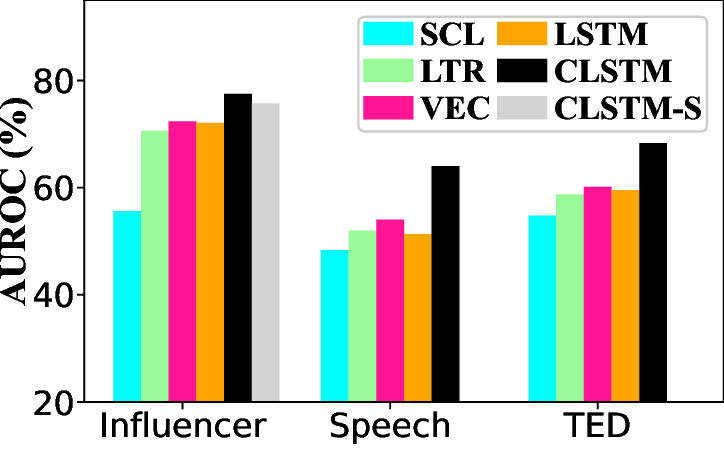}
    \label{fig:auc_cmp}
    }
    \subfigure[ Model updates]{
    \includegraphics[width=2.6 cm]{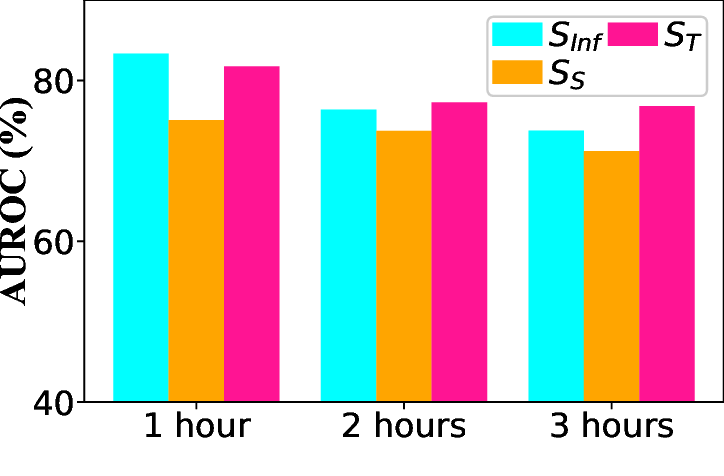}
    \label{fig:EffeModelupdate}
    }\vspace{-2ex}
    \caption{\small Effectiveness under AUROC.}\vspace{-1ex}
\end{figure}
}
\begin{figure}[hthb]\vspace{-2ex}
\centerline{
    \includegraphics[width=0.48\textwidth]
    {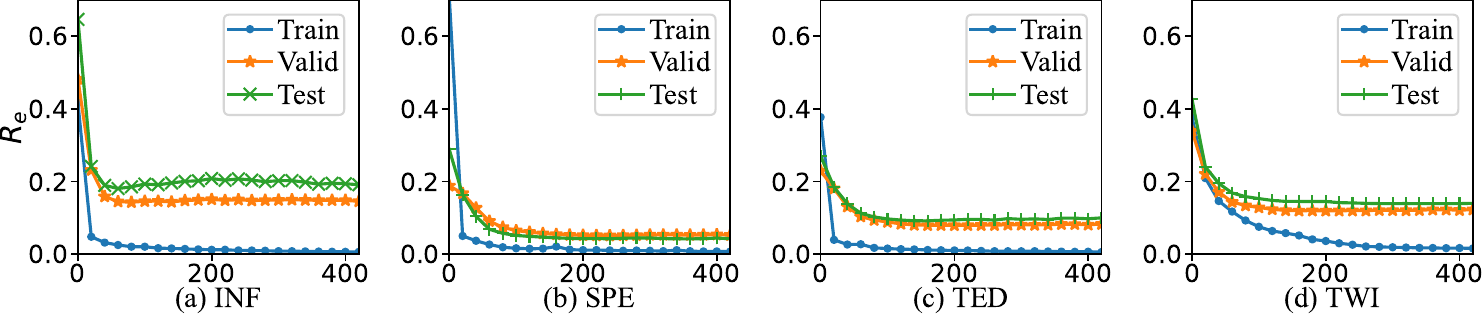}
}\vspace{-0.5ex}
\caption{\small{Effect of epoch.}}\vspace{-2ex}
\label{fig:parameters_epoch}
\end{figure}

\subsubsection{Effect of Audience Interaction Weight $\omega$}
We evaluate the effect of audience interaction weight $\omega$ in our objective function in CLSTM training. In this test, the $\omega$ is varied from 0 to 1, and the optimal epoch number is applied. Fig. \ref{fig:under_auroc} (a) reports the corresponding AUROC values with respect to different $\omega$. Clearly, CLSTM achieves the best AUROC when $\omega=0.8$ for INF, and $\omega=0.9$ for SPE, TED, and TWI. 

\subsubsection{Effectiveness Comparison}

\begin{figure}[hthb]\vspace{-2ex}
\centerline{
            \includegraphics[width=0.28\textwidth]{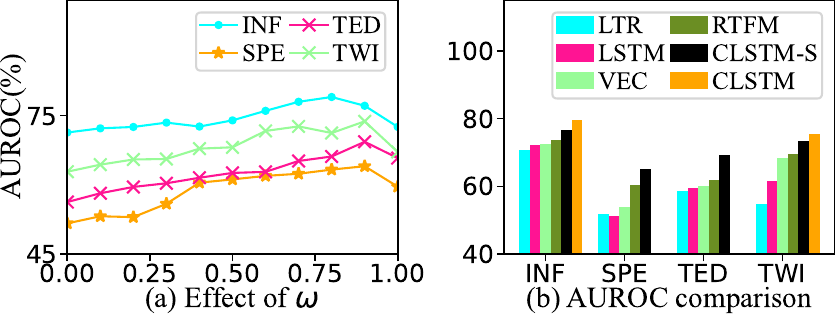}
        }
\vspace{-0.5ex}
\caption{\small {Effectiveness under AUROC.}}\vspace{-2ex}
\label{fig:under_auroc}
\end{figure}

\eat{
\begin{figure}[htb]
\centering
    \subfigure[Influencer]{
    \includegraphics[width=2.6 cm]{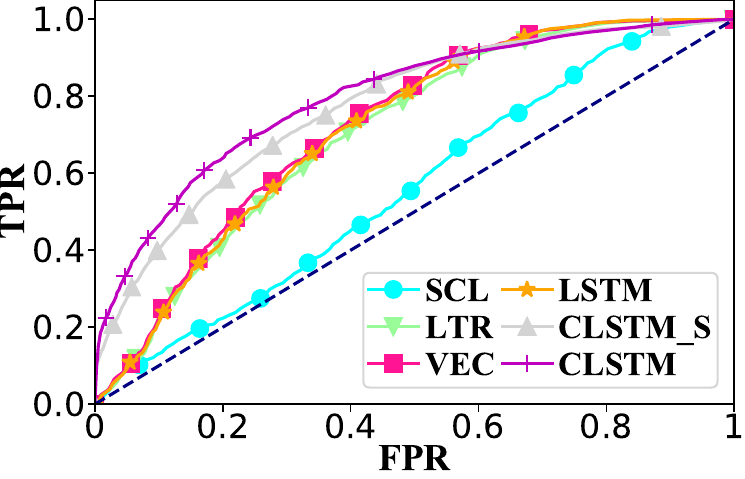}
    }
    \subfigure[Speech]{
    \includegraphics[width=2.6 cm]{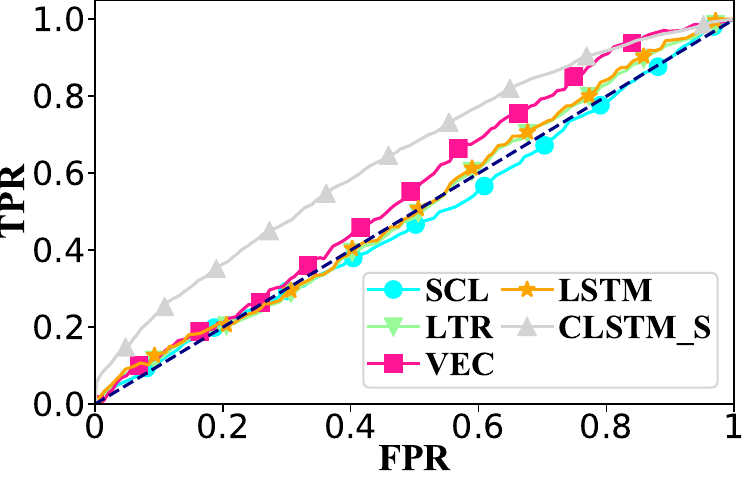}
    }
   \subfigure[TED]{
    \includegraphics[width=2.6 cm]{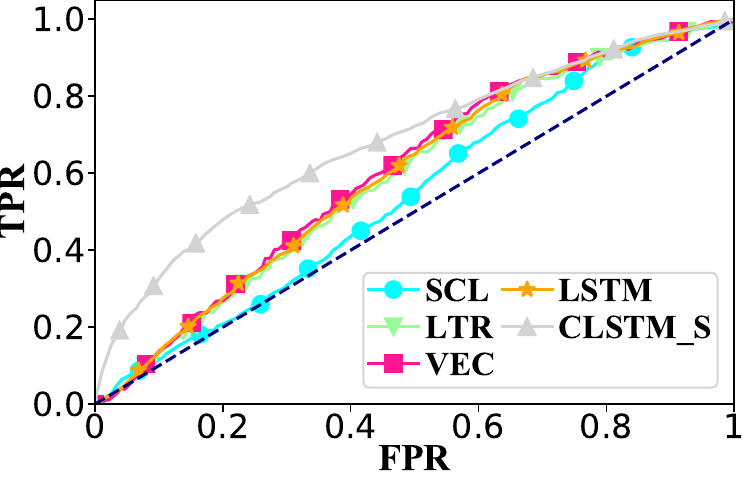}
    }
    \vspace{-2ex}\caption{\small ROC curves comparison on datasets.}
    \label{fig:roc}\vspace{-2ex}
\end{figure}
}

We compare the effectiveness of six methods, CLSTM,
LSTM, CLSTM-S, LTR\cite{DBLP:conf/cvpr/0003CNRD16}, VEC\cite{DBLP:conf/mm/YuWCZXYK20}, and RTFM \cite{DBLP:conf/iccv/TianPCSVC21}, by performing anomaly detection over
all datasets. For all the methods, we apply their optimal parameter settings. Fig. \ref{fig:roc} (a)-(d) show the ROC curve over four sets. Clearly, CLSTM-based detection obtains the best true positive rate for each false positive rate level, followed by the CLSTM-S-based method for INF and TWI data. The high effectiveness of CLSTM is obtained by its strong capability in capturing the spatial and temporal information of both influencers and audience, and their mutual influence. CLSTM-S-based method can capture the influence of speakers to audience in addition to the spatial-temporal information of both sides, thus achieving better effectiveness than other competitors. For SPE and TED, CLSTM performs the same as CLSTM-S since the characteristics of these videos that the comments from audience can not be received by speakers. VEC and RTFM are better than LSTM-based detection because they predict the representation of a video segment based on both its previous and next segments, capturing the temporal information in a bidirectional way.
Fig. \ref{fig:under_auroc} (b) reports the AUROC of different methods over four datasets. Our CLSTM-based detection achieves the best effectiveness in terms of AUROC, which further proves the high effectiveness of CLSTM. 
\begin{figure}[hthb]\vspace{-2ex}
\centerline{
    \includegraphics[width=0.48\textwidth]{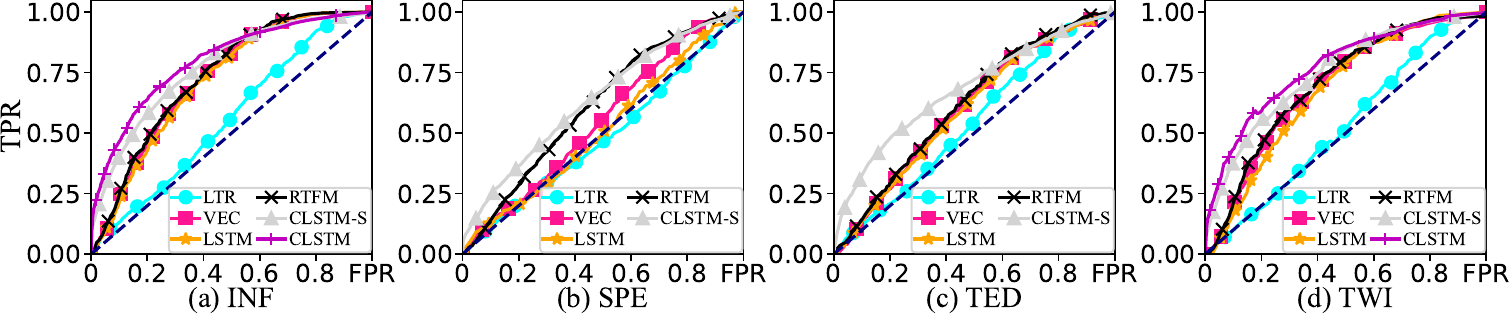}
        }
\vspace{-0.5ex}
\caption{\small{ROC curves comparison on datasets.}}\vspace{-2ex}
\label{fig:roc}
\end{figure}

\eat{
\subsubsection{Effectiveness of Audience Interaction Prediction}
We evaluate the effectiveness of audience interaction prediction by visualizing the reconstructed audience interaction and the ground truth audience interaction. We normalize the audience interaction to a value in [0,1] space. The curve of predicted audience interactions and that of the true interactions are shown in Fig. \ref{fig:AI_Pre}.
For each dataset, the left side of blue line shows the fittings for normal segments from validation set, and the right side of blue line shows those for both normal and abnormal segments from test set. Clearly, the fittings of left side are very accurate, which indicates CLSTM well learns the patterns of normal audience interaction features. The fittings of right side include both accurate prediction of normal segments and the very inaccurate prediction of abnormal segments that indicate big reconstruction errors (RE). Thus, it is reasonable to use the RE of audience interaction for anomaly detection. 
For the normal parts of SPE and TED, the differences between the predicted interactions and the true ones are bigger than those for influencer videos as the speakers in SPE and TED cannot receive the real-time audience feedback. Thus CLSTM performs best for the live social videos with mutual interactions between speakers and audience.

\begin{figure}[hthb]\vspace{-2ex}
\centerline{
            \includegraphics[width=0.45\textwidth]{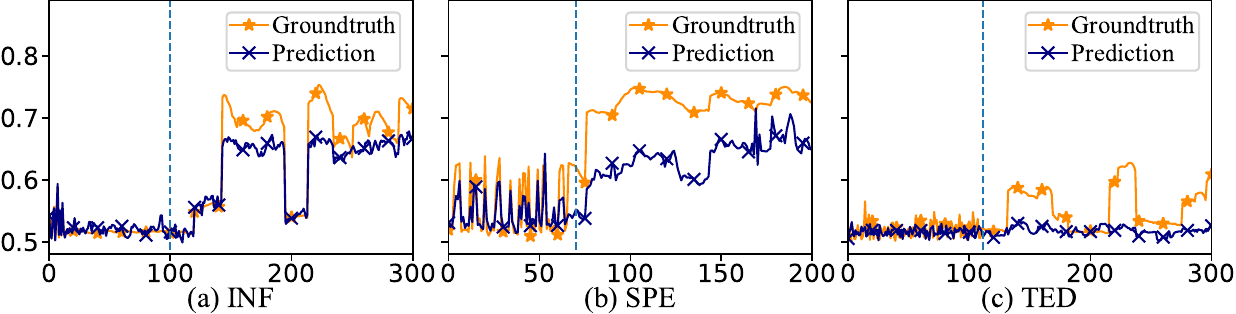}
        }
\vspace{-2ex}
\caption{\small Prediction of normalized audience interaction.}\vspace{-1ex}
\label{fig:AI_Pre}
\end{figure}
}

\subsubsection{Effect of Dynamic Updating}

We test the effect of incremental dynamic updates and compare it with the re-training strategy on the anomaly detection effectiveness. For each dataset, we take the original trained model over the training set and incrementally update the model by varying the update frequency from every 1 to 3 hours. Likewise, we retrain the CLSTM at the same update frequencies. We detect the anomalies over streams in test set using the CLSTM with each update frequency. Table \ref{tab:effect_update} shows the AUROC of CLSTM with two dynamic update strategies. Clearly, when the update frequency is small, our incremental model obtains the best performance. This is because more frequent updates can fit the model to reflect the data update on time, thus better avoiding the model drift problem in streaming. Moreover, the incremental updating outperforms the re-training strategy, since it treats the old training and incoming training data separately. Accordingly, the part of CLSTM over the new data well reflects the recent activities of influencer and audience, and can better affect the prediction results. On the other hand, re-training strategy mixes all the sequence data and treats them equally in the training, which cannot differentiate the importance of old data and new data, leading to low AUROC.

\begin{table}[tb] 
\setlength\tabcolsep{2.pt}
    \begin{center}
        \begin{tabular}{|l|cccc|cccc|}
        \hline
           &\multicolumn{4}{|c|}{Incremental Update}& \multicolumn{4}{|c|}{Re-training}\\
         Freq.  & INF& SPE& TED&TWI &INF& SPE& TED&TWI \\
            \hline
                       1h& {83.33}& {75.06}& {81.75}& {79.42} &76.21& 70.33& 73.11&73.56 \\
            \hline
                       2h& 76.37& 73.73& 77.27&74.11 &73.79& 69.70& 71.63&72.11 \\
            \hline
                       3h& 73.77& 71.20& 76.80&72.13 &72.39& 67.66& 71.30& 70.93 \\
            \hline
          
        \end{tabular}
    \end{center}\vspace{-2ex}
    \caption{\small {Effect of incremental model updates (AUROC($\%$)).}}\vspace{-5ex}
    \label{tab:effect_update}
\end{table}

\subsection{Efficiency Evaluation}

We evaluate the efficiency of our anomaly detection in terms of overall time cost, and the cost of dynamic model update.

\subsubsection{Comparing the Filtering Power} We evaluate the filtering power of different bounds in video anomaly detection. Fig. \ref{fig:EffiComp} (a) shows the filtering powers of three bounds $JS_{max}$, $JS_{min}$ and $RE_I^G$, and their combinations, $JS_{max}$+$JS_{min}$, $JS_{max}$+$JS_{min}$ +$RE_I^G$ and ADOS. As we can see, $RE_i^G$ has similar filtering power as $JS_{max}$+$JS_{min}$, which indicates $RE_i^G$ is a tighter upper bound. In addition, the combination of all the bounds achieves the strongest filtering power, because the $L_1$-based filtering, $JS_{max}$ and $JS_{min}$, and $RE_I^G$ filter the video segments from different aspects, thus suitable to different video segments. Thus, it is necessary to optimize the efficiency by adaptively integrating three bound measures.

\begin{figure}[hthb]\vspace{-2ex}
\centerline{
            \includegraphics[width=0.45\textwidth]{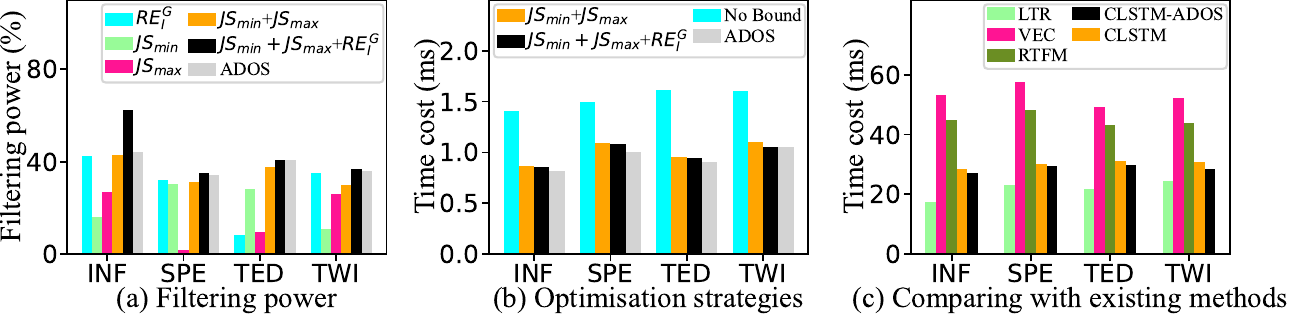}
        }
\vspace{-0.5ex}
\caption{\small Filtering power and efficiency evaluation.}
\vspace{-2ex}
\label{fig:EffiComp}
\end{figure}

\subsubsection{Effect of $T_1$ and $T_2$} We first test the effect of $T_1$ by varying $T_1$ from 1.1 to 2 over four datasets, and report the time cost of anomaly identification at each $T_1$ value. Fig. \ref{fig:FP_T} (a) reports the time cost of the detection at different $T_1$ over four datasets. As we can see, for each dataset, the time cost decreases first with the increase of $T_1$, reaches the lowest cost when $T_1$ is 1.6 for INF and TWI, 1.8 for TED and SPE, and increases again with the further increasing of $T_1$ values. This is because the number of anomalies could not be filtered when $T_1$ is small, while they are not skipped, leading to a big number of redundant $L_1$ distance calculations. There is a fluctuation of the trend due to the irregular distributions of dimension values in different subspaces. Thus we select the default $T_1$ to 1.6 for INF and TWI data, and 1.8 for TED and SPE data, respectively.

\eat{
\begin{figure}[htb]
\centering
   
    \subfigure[Effect of $T_1$]{
    \includegraphics[width=2.6 cm]{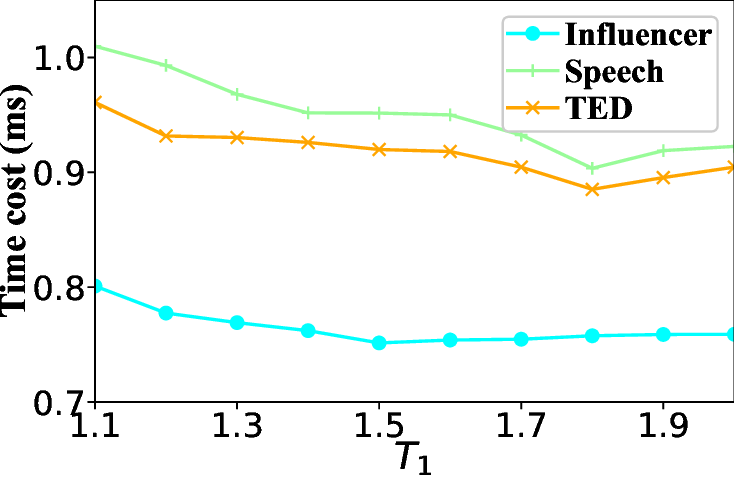}
    \label{fig:T1}
    }
    \subfigure[Effect of $T_2$]{
    \includegraphics[width=2.6 cm]{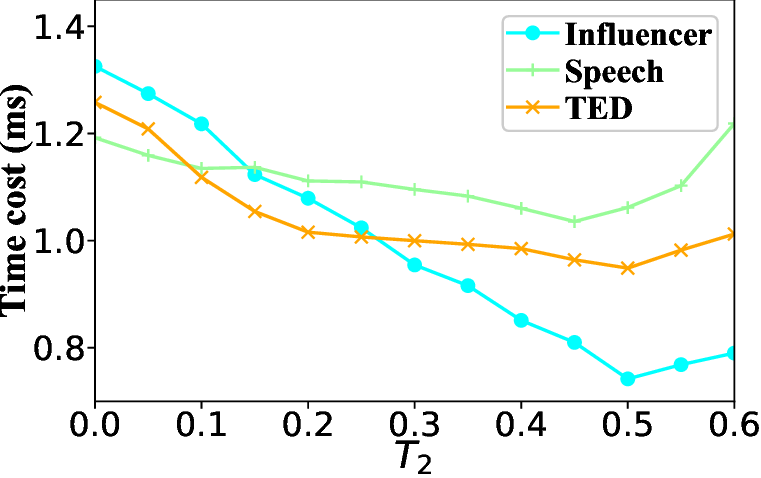}
    \label{fig:T2}
    }
     \subfigure[Effect of $N_{sg}$]{
    \includegraphics[width=2.6 cm]{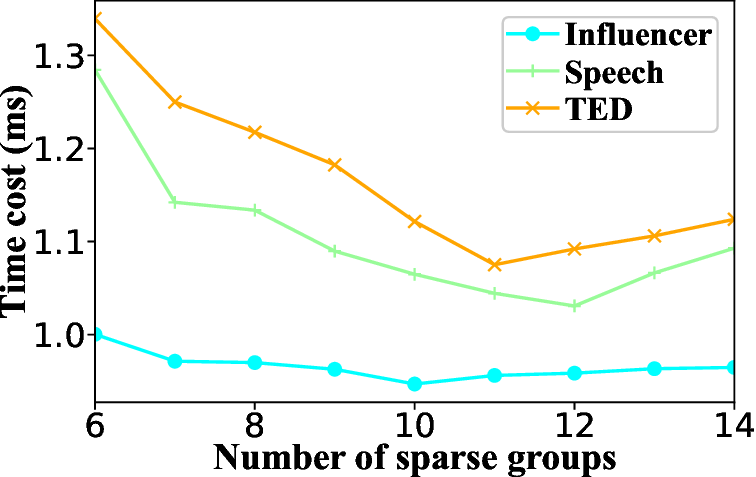}
    \label{fig:incrementalEffe}
    }\vspace{-2ex}
    \caption{\small Effect of thresholds.}\vspace{-2ex}    \label{fig:FP_T}    
\end{figure}
}

\begin{figure}[hthb]\vspace{-2ex}
\centerline{
            \includegraphics[width=0.45\textwidth]{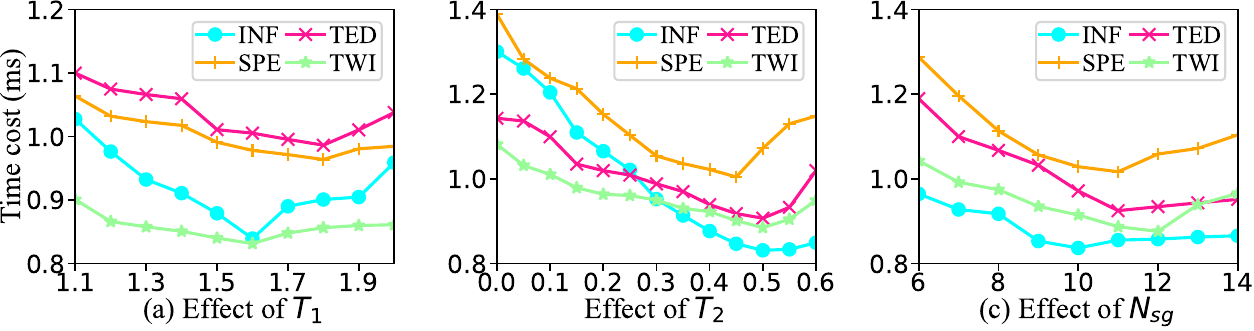}
        }
\vspace{-0.5ex}
\caption{\small Effect of thresholds.}\vspace{-2ex}
\label{fig:FP_T}
\end{figure}

We also test the time cost of anomaly identification by varying $T_2$ from 0 to 0.6, and report the time cost of anomaly detection at each $T_2$. Fig. \ref{fig:FP_T} (b) shows the time cost change of the detection over four datasets. Clearly, the best efficiency is achieved when $T_2$ is 0.5, 0.5, 0.45, and 0.5 for INF, TED, SPE, and TWI data respectively. When $T_2$ is small, a big number of normal segments cannot be filtered out with an extra $L_1$-based upper bound. Thus we select 0.5, 0.5, 0.45, and 0.5 as the default values for INF, TED, SPE, and TWI respectively.

\subsubsection{Effect of Incremental Computation in $RE_I$} 
We compute the $RE_I^G$ over the dimension groups. As the sparse groups produce loose bounds for the corresponding parts in $RE_I$ approximation, we compute the feature difference in the sparse groups in the original space whose results can be used incrementally in the final $RE_I$ distance calculation. Here, we evaluate the optimal number of sparse groups $N_{sg}$ by testing the time cost of anomaly detection with $N_{sg}$ values set to 0 to 14. Fig. \ref{fig:FP_T} (c) shows the time cost change over each dataset. Obviously, the cost trend drops with fluctuation to the best performance at $N_{sg}$=10 for INF, $N_{sg}$=11 for TED and SPE, and $N_{sg}$=12 for TWI data, and increases with fluctuation after the best $N_{sg}$ points. Thus we select the default $N_{sg}$ values, 10 for INF, 11 for TED and SPE, and 12 for TWI.


\subsubsection{Comparing Different Optimisation Strategies}
We compare the effect of different optimisation strategies by testing the time cost of anomaly detection using two optimisation strategies including the simple combination of three measures $JS_{max}$+$JS_{min}$+$RE_I^G$, the adaptive optimation strategy ADOS, and the detection without bound as a reference to evaluate how big the efficiency improvement is for each strategy. Fig. \ref{fig:EffiComp} (b) shows the test results of three approaches. Clearly, our proposed ADOS achieves higher efficiency improvement than the $JS_{max}$+$JS_{min}$+$RE_I^G$. This is because our ADOS adaptively selects the bounds for filtering, which avoids the unnecessary $L_1$-based calculation leading to invalid filtering operations. This has proved the priority of our ADOS strategy. 
 
\subsubsection{Efficiency Comparison}

We compare our CLSTM-based anomaly detection with state-of-the-art techniques in terms of the overall time cost. We test the time cost of the detection over four datasets. Fig. \ref{fig:EffiComp} (c) reports our CLSTM-based methods with the existing competitors, LTR, VEC, and RTFM in terms of the time cost for anomaly detection. Clearly, CLSTM is much faster than VEC and RTFM. Compared with LTR, CLSTM achieves comparable efficiency and much higher effectiveness. Furthermore, CLSTM-ADOS achieves the best efficiency performance because it adopts upper/lower bound filtering with an adaptive bound selection strategy which effectively reduces the distance calculation in the original visual feature space. This has proved the efficacy of CLSTM. 

\subsubsection{Cost of Dynamic Update}

To evaluate the efficiency of dynamic update, we take the previously trained model on the training set of each dataset, and use the test dataset as incoming videos to trigger the update every one hour.
The time costs for the model update over $S_{I}$,  $S_S$, $S_T$, and $S_{W}$ datasets are 174.02s, 130.32s, 143.65s, 183.42s respectively, while the re-training process requires 5.2h, 2.4h, 6.0h, 20.5h respectively to update the model over these four datasets on each iteration. Compared with re-training, the incremental strategy achieves up to 403 times improvement in terms of model update time cost. As we adopt an incremental strategy, our model is updated over the incoming data only when the trigger finds the model drift, leading to high update efficiency. However, re-training strategy requires the whole dataset to be trained in model updates, leading to high time cost for model maintenance. Thus the model update cost is well controlled with incremental update strategy for stream processing.

\subsection{Case Study of the Live Social Video Anomaly Detection}
\label{seccasestudy}
We examine the video segments from INF dataset and the detection results of our CLSTM and SOTA approaches to determine the effectiveness of different approaches. We select a test video stream randomly from INF dataset and divide it into segments. The sequential segments are fed into the learned model to generate the predicted features of the next-time segment. For each segment, the anomaly score is computed, and the anomalies are detected. 
We report the anomaly scores generated using different detection approaches and show if a segment is an anomaly based on both detection results using different methods and the ground truth labels for these samples. 
Fig. \ref{fig:VideoSegSamples} shows 15 video segment samples. The detection results and labels are reported in Table \ref{tab:detectRst}. Here, $L_p$ and $L_g$ mean the predicted and ground truth labels respectively. A video segment labeled with 1 is an anomaly, while the one labeled with 0 is a normal one. The underlined bold results generated by different methods indicate false detections. Clearly, our CLSTM and CLSTM-S produce one false detection only, while the number of wrong detections generated by existing techniques, LTR, VEC, LSTM and RTFM, are 5, 3, 4 and 3 respectively. Thus, our CLSTM achieves best detection results that are consistent to the ground truth labels. This has further proved that CLSTM outperforms the SOTA techniques. \eat{An underlined bold result indicates a false detection. As shown in Table \ref{tab:detectRst}, CLSTM and CLSTM-S perform better than existing models, with only one false detection generated. This is because CLSTM models explore both the influencer action feature and the audience interaction feature to detect anomalies.
RTFM and VEC consider temporal visual information and achieve the second-best performance by only generating three false detections. LSTM performs slightly worse since it also only considers the temporal visual information. LTR performs worst, generating 5 false detections, since this model relies on visual information and is hard to capture long-term information. }

\begin{figure}[tb]
\centering

    \subfigure[Sid: 1]{
    \includegraphics[width=1.4 cm]{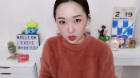}
    \label{fig:s1}
    }
    \subfigure[Sid: 1]{
    \includegraphics[width=1.4 cm]{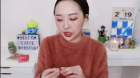}
    \label{fig:s2}
    }
    \subfigure[Sid: 3]{
    \includegraphics[width=1.4 cm]{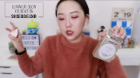}
    \label{fig:s3}
    }
    \subfigure[Sid: 4]{
    \includegraphics[width=1.4 cm]{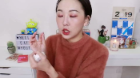}
    \label{fig:s4}
    }
    \subfigure[Sid: 5]{
    \includegraphics[width=1.4 cm]{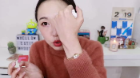}
    \label{fig:s5}
    }
      \subfigure[Sid: 6]{
    \includegraphics[width=1.4 cm]{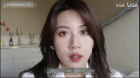}
    \label{fig:s6}
    }
    \subfigure[Sid: 7]{
    \includegraphics[width=1.4 cm]{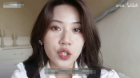}
    \label{fig:s7}
    }
    \subfigure[Sid: 8]{
    \includegraphics[width=1.4 cm]{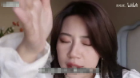}
    \label{fig:s8}
    }
    \subfigure[Sid: 9]{
    \includegraphics[width=1.4 cm]{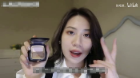}
    \label{fig:s9}
    }
    \subfigure[Sid: 10]{
    \includegraphics[width=1.4 cm]{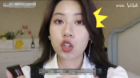}
    \label{fig:s10}
    }  
    \subfigure[Sid: 11]{
    \includegraphics[width=1.4 cm]{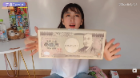}
    \label{fig:s11}
    }
    \subfigure[Sid: 12]{
    \includegraphics[width=1.4 cm]{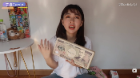}
    \label{fig:s12}
    }
    \subfigure[Sid: 13]{
    \includegraphics[width=1.4 cm]{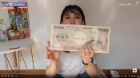}
    \label{fig:s13}
    }
    \subfigure[Sid: 14]{
    \includegraphics[width=1.4 cm]{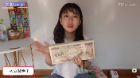}
    \label{fig:s14}
    }
    \subfigure[Sid: 15]{
    \includegraphics[width=1.4 cm]{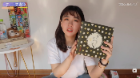}
    \label{fig:s15}
    }  
   \vspace{-2ex} \caption{\small Video segment samples.}  \vspace{-1ex}  
    \label{fig:VideoSegSamples}
\end{figure}
\eat{
\begin{table}[tb] 
\setlength\tabcolsep{2.pt}
    \begin{center}
        \begin{tabular}{|l|cc|cc|cc|cc|cc|cc|c|}
        \hline
         $S_i$  &  \multicolumn{2}{|c|}{SCL} & \multicolumn{2}{|c|}{LTR}& \multicolumn{2}{|c|}{VEC}& \multicolumn{2}{|c|}{LSTM}&\multicolumn{2}{|c|}{CLSTM-S}&\multicolumn{2}{|c|}{CLSTM} & L$_g$.\\
           &Score&L$_p$&Score&L$_p$&Score &L$_p$&Score&L$_p$&Score&L$_p$&Score&L$_p$& \\
            \hline
           1& 0.046& 0& 0.013& 0 & 0.043& 0& 0.022& 0& 0.036& 0& 0.069 &0 &0\\
           2& 0.062& 0&  0.033&  0 &0.040 & 0 &  0.118 & \underline{1}& 0.011 & 0&0.123  &\underline{1} &0\\
           3& 0.060& \underline{0}& 0.110 &   1  &0.045 & 1& 0.054& 1&  0.579 & 1& 0.079 & 1&1\\
           4& 0.299& 1&  0.107 &  1   & 0.224  & 1& 0.021 & \underline{0}& 0.013 & \underline{0}& 0.095 & 1&1\\
           5& 0.383& 1& 0.058&   \underline{0}  & 0.028  & \underline{0}&  0.146 &  1&  0.045 & 1& 0.100 & 1&1\\
           6& 0.052& 0& 0.060 &  0   &0.004  & 0& 0.018  & 0& 0.033& 0&0.033 &0 &0\\
           7& 0.026& 0& 0.147  &  \underline{1}   & 0.028 & \underline{1}& 0.033& 0& 0.033 &0 & 0.060&0 &0\\
           8& 0.242& 1& 0.204 &  1 & 0.082  & 1& 0.189  & 1&  0.082 & 1& 0.369 & 1&1\\
           9& 0.037& \underline{0}&  0.110 &  1  &  0.108& 1&0.213 & 1&0.037 & 1& 0.193  &1 &1\\
           10&0.269 & 1& 0.237&  1 & 0.097 & 1&  0.109 & 1& 0.158 & 1&0.095 & 1&1\\
           \hline
        \end{tabular}
    \end{center}
    \caption{\small Detection results.}\vspace{-6ex}
    \label{tab:detectRst}
\end{table}
}
\begin{table}[tb] 
\setlength\tabcolsep{2.pt}
    \begin{center}
        \begin{tabular}{|l|cc|cc|cc|cc|cc|cc|c|}
        \hline
         $S_i$   & \multicolumn{2}{|c|}{LTR}& \multicolumn{2}{|c|}{VEC}& \multicolumn{2}{|c|}{LSTM} 
  &  \multicolumn{2}{|c|}{RTFM}&\multicolumn{2}{|c|}{CLSTM-S}&\multicolumn{2}{|c|}{CLSTM} & L$_g$.\\
           &Score&L$_p$&Score&L$_p$&Score &L$_p$&Score&L$_p$&Score&L$_p$&Score&L$_p$& \\
            \hline
           1& 0.013& 0 & 0.043& 0& 0.022& 0& 0.028& 0& 0.021& 0& 0.055 &0 &0\\
           2&   0.033&  0 &0.040 & 0 &  0.118 & \textbf{\underline{1}}&0.070& 0& 0.019 & 0&0.089  &\textbf{\underline{1}} &0\\
           3& 0.110 &   1  &0.045 & 1& 0.054& 1& 0.127& 1&  0.141 & 1& 0.124 & 1&1\\
           4&  0.107 &  1   & 0.224  & 1& 0.021 & \textbf{\underline{0}}& 0.431& 1& 0.023 & \textbf{\underline{0}}& 0.103 & 1&1\\
           5& 0.058&   \textbf{\underline{0}}  & 0.028  & \textbf{\underline{0}}&  0.146 &  1& 0.126& 1&  0.065 & 1& 0.271 & 1&1\\
           6& 0.060 &  0   &0.004  & 0& 0.018  & 0& 0.012& 0& 0.032& 0&0.053 &0 &0\\
           7& 0.147  &  \textbf{\underline{1}}   & 0.068 & \textbf{\underline{1}} & 0.031& 0& 0.016& 0& 0.030 &0 & 0.057&0 &0\\
           8& 0.204 &  1 & 0.082  & 1& 0.189  & 1& 0.127& 1&  0.082 & 1& 0.263 & 1&1\\
           9&  0.110 &  1  &  0.108& 1&0.213 & 1& 0.022& \textbf{\underline{0}} &0.143 & 1& 0.145  &1 &1\\
           10& 0.237&  1 & 0.097 & 1&  0.109 & 1&0.290 & 1& 0.087 & 1&0.090 & 1&1\\
         11& 0.237&  1 & 0.097 & 1&  0.092 & 1&0.290 & 1& 0.058 & 1&0.083 & 1&1\\
         12& 0.341&  \textbf{\underline{1}}& 0.008 & 0&  0.029 & 0&0.290 & \textbf{\underline{1}}& 0.008 & 0&0.045 & 0&0\\
         13& 0.113&  \textbf{\underline{1}} & 0.023 & 0&  0.119 & \textbf{\underline{1}}&0.064 & 0& 0.017 & 0&0.032 & 0&0\\
         14& 0.031&   0& 0.037 & 0&  0.023 & 0&0.035 & 0& 0.035 & 0&0.069 & 0&0\\
         15& 0.870&  \textbf{\underline{1}} & 0.103 & \textbf{\underline{1}}&  0.153 & \textbf{\underline{1}}&0.136 & \textbf{\underline{1}}& 0.033 & 0&0.055 & 0&0\\
           \hline
        \end{tabular}
    \end{center}\vspace{-2ex}
    \caption{\small Anomaly detection results of video segment samples.}\vspace{-6ex}
    \label{tab:detectRst}
\end{table}
\section{Conclusion}\label{sec:conclusion}

This paper proposes a novel framework AOVLIS for anomaly detection over social video live streaming. We first propose a novel CLSTM for video segment series. Then we propose a weighted RE scoring for anomaly detection. Moreover, we propose to dynamically update our model incrementally. 
To speed up the detection efficiency, we propose a new ADG-based dimensionality reduction for action recognition features together with a non-trivial lower bound distance in the reduced group space to filter out the false alarms without false dismissals. An adaptive optimisation strategy is proposed to select a bound function for efficient anomaly candidate filtering. Extensive tests prove the high efficacy of  AOVLIS.

\eat{
\section*{Acknowledgment}
The work is partially supported by ARC Discovery Projects (DP200101175) and CSIRO Data61 Grant.
}

\end{document}